\newtheorem{theorem}{Theorem}
\newtheorem{definition}{Definition}
\newtheorem{proposition}{Proposition}
\newcommand{\adam}[1]{{\color{blue} Adam: #1}}
\newcommand{\mateusz}[1]{{\color{violet} Mateusz: #1}}
\newcommand{\piotr}[1]{{\color{red} Piotr: #1}}
\newcommand{\mikolaj}[1]{{\color{green} Mikołaj: #1}}
\newcommand{\molko}[1]{{\color{pink} MOlko: #1}}
\renewcommand{\adam}[1]{}
 \renewcommand{\mateusz}[1]{}
 \renewcommand{\piotr}[1]{}
 \renewcommand{\mikolaj}[1]{}
 \renewcommand{\molko}[1]{}
\begin{document}

%\title{VARSHAP: A Variance-Based Approach to Local Feature Attribution}
\title{VARSHAP: Addressing Global Dependency Problems in Explainable AI with Variance-Based Local Feature Attribution}

\author{%
  Mateusz Gajewski\\
  %\thanks{Use footnote for providing further information about author (webpage, alternative address)---\emph{not} for acknowledging funding agencies.} \\
  Faculty of Computing and Telecommunications \\
  Poznan University of Technology \\
  Poznan, Poland\\
  IDEAS NCBR\\
  \texttt{mg96272@gmail.com} \\
    \And
  Mikołaj Morzy \\
  Faculty of Computing and Telecommunications \\
  Poznan University of Technology \\
  Poznan, Poland\\
\And
  Adam Karczmarz \\ 
  Faculty of Mathematics, Informatics and Mechanics \\
  University of Warsaw\\
  Warsaw, Poland\\
    \And
    Piotr Sankowski \\ 
  Faculty of Mathematics, Informatics and Mechanics \\
  University of Warsaw\\
  Warsaw, Poland\\
  MIM Solutuions\\
  Research Institute IDEAS\\
  % examples of more authors
  % \And
  % Coauthor \\
  % Affiliation \\
  % Address \\
  % \texttt{email} \\
  % \AND
  % Coauthor \\
  % Affiliation \\
  % Address \\
  % \texttt{email} \\
  % \And
  % Coauthor \\
  % Affiliation \\
  % Address \\
  % \texttt{email} \\
  % \And
  % Coauthor \\
  % Affiliation \\
  % Address \\
  % \texttt{email} \\
}

\date{\today}
\maketitle

\begin{abstract}
    Existing feature attribution methods like SHAP often suffer from global dependence, failing to capture true local model behavior. This paper introduces VARSHAP, a novel model-agnostic local feature attribution method which uses the reduction of prediction variance as the key importance metric of features. Building upon Shapley value framework, VARSHAP satisfies the key Shapley axioms, but, unlike SHAP, is resilient to global data distribution shifts. Experiments on synthetic and real-world datasets demonstrate that VARSHAP outperforms popular methods such as KernelSHAP or LIME, both quantitatively and qualitatively. 
\end{abstract}

\section{Introduction}
\label{sec:introduction}

\paragraph{Feature attribution}
\label{subsec:feature.attribution}

Explainable Artificial Intelligence (XAI) has become increasingly crucial as machine learning models are deployed in high-stakes domains such as healthcare, finance, or criminal justice. In these sensitive areas, the transparency and interpretability of model decisions are paramount for ensuring fairness, accountability, and trust. Among the diverse approaches developed to provide explanations about the inner workings of complex models, feature attribution methods have emerged as particularly valuable tools~\citep{holzinger2020explainable}. These methods aim to quantify the contribution of each input feature to a model's prediction. By providing insights into the decision-making process, feature attribution techniques facilitate model debugging, bias detection, and the overall validation of model behavior against domain knowledge.

Feature attribution methods can be broadly categorized into two main types: global feature importance and local feature importance. Global methods aim to provide a single attribution value for each feature used by the model, offering a high-level overview of which features are, on average, most influential in the model's decision-making process. This aggregated perspective is useful for understanding the general behavior and key drivers of the model. In contrast, local feature attribution methods produce a feature attribution for a specific model output, meaning they explain why the model made a particular prediction for an individual instance. Local attributions can therefore offer insights into which attributes were most significant in making a prediction for a particular instance, revealing the nuances of the model's behavior that might be obscured by global approaches~\citep{guidotti2018survey, murdoch2019definitions}. This instance-specific granularity is particularly valuable in contexts where understanding individual predictions is critical.

\paragraph{Shapley values and global dependency}
\label{subsec:shapley.values.and.global.dependency}

Shapley value theory~\cite{shapley1953value}, originating from cooperative game theory, provides a principled method of allocating payout among players. In~\cite{vstrumbelj2011general} authors suggested that the input features in machine learning models can be treated as players and the model output can be seen as a payout, providing a way to use Shapley value in feature attribution.
\adam{w końcu jak należy pisać cytowania? Tak czy inaczej, raczej zamiast In [cite] author .., lepiej brzmi: Strumbelj and Kononenko \cite{vstrumbelj2011general} suggested... Czy nie ma jakiegoś wymagania, żeby pisać citet albo citep? Choć rzeczywiście, numerowane cytowania pewnie trochę oszczędzają miejsca}
%This approach fairly allocates a model’s prediction "payout", the deviation from a baseline, among input features based on their contributions to all possible feature coalitions. 
The strong mathematical underpinnings of Shapley values, characterized by desirable axioms like efficiency, symmetry, null-player, and additivity, have made this framework a popular choice for generating feature attributions~\cite{li2024shapley}. Among these, SHAP (SHapley Additive exPlanations)~\cite{lundberg2017unified} is a prominent unified method. SHAP assigns each feature an importance value for a particular instance, quantifying its contribution to shifting the model’s output from a baseline to its current value. A key strength of the SHAP framework is its model-agnostic nature, allowing it to be applied to any machine learning model without requiring access to its internal structure or specific assumptions.

Despite their advantages, SHAP-based methods struggle to fully capture local model behavior due to their inherent global dependence. This issue arises because, when calculating a feature's importance for a specific instance, these methods typically simulate feature absence by sampling from that feature's global distribution across the entire dataset. While this maintains properties like additivity, it can produce explanations that fail to accurately reflect the model's behavior specifically around the point of interest.

This limitation was formalized by~\citep{bilodeau2024impossibility},
\adam{znów: zwykle nie widuję "by [numerek]". To może przechodzi, gdy użyje sie numerowania alfabetycznego, typu "by [ABCD15]", ale tak to nie wiem}
who showed that for piecewise linear models, methods like SHAP and Integrated Gradients~\citep{sundararajan2017axiomatic} can produce arbitrarily different attributions for models that behave identically in a local neighborhood by manipulating the model outside that neighborhood. Consequently, the generated attributions may represent average feature effects rather than the precise local sensitivities that drive predictions for particular inputs, potentially misleading analysis for subgroups (e.g., patients affected differently by factors than the general population).

%To illustrate the practical implications of global dependence, consider a healthcare scenario where a machine learning model is used to predict the risk of a certain complication for patients. For instance, low blood pressure, such as a systolic reading of 90-100 mmHg, which in the general adult population is often considered normal or even beneficial and indicative of a lower risk of conditions like hypertension, might be assigned low importance by a predictive model for a patient suffering from severe aortic stenosis if the model’s understanding is primarily based on common population data. This could occur even if, for that specific patient’s unique physiological profile, where that same 90-100 mmHg systolic reading is critically indicative of high risk, signaling dangerously reduced cardiac output and impending organ hypoperfusion. In such cases, the resulting explanations from the model, while perhaps mathematically sound from a global, general population perspective, would fail to capture the truly local behavior and critical high-risk implication of the low blood pressure that (ideally) determined the model’s high-risk prediction for this specific, atypical instance with severe aortic stenosis

\subsection{Our contribution}
\label{subsec:original.contribution}

% In this paper, we address the outlined shortcomings of features attribution methods. 

% We introduce an \textbf{instance-centered perturbation mechanism} as an approach to handle out-of-coalition features, focusing exclusively on local model behavior. 

% We theoretically justify the use of variance as the only function satisfying desirable and intuitive axioms for feature attribution in a local context. 

% Furthermore, we introduce \textbf{VARSHAP (VARiance-based SHapley Additive exPlanations)}, a novel framework for generating local feature attributions, combining the above theoretical insights to avoid pitfalls stemming from global dependence. 

% Finally, we evaluate VARSHAP against two popular model-agnostic methods -- LIME and SHAP -- across multiple attribution metrics and model architectures, demonstrating its advantages.
In this paper, we address the outlined shortcomings of features attribution methods.
\begin{itemize}
    \item We introduce an \textbf{instance-centered perturbation mechanism} as an approach to handle out-of-coalition features, focusing exclusively on local model behavior.
    
    \item We theoretically justify the use of variance as the only function satisfying desirable and intuitive axioms for feature attribution in a local context.
    
    \item We introduce \textbf{VARSHAP (VARiance-based SHapley Additive exPlanations)}, a novel framework for generating local feature attributions, combining the above theoretical insights to avoid pitfalls stemming from global dependence.
    
    \item We evaluate VARSHAP against two popular model-agnostic methods -- LIME and SHAP -- across multiple attribution metrics and model architectures, demonstrating its advantages.
\end{itemize}
\adam{nie wiem jaka jest kultura ML-owa, ale ta lista to pewnie mogłoby być cztery osobne paragrafy. Można wtedy usunąć pierwszą linijkę. W ogóle może lepiej Our contribution? Znów, nigdy nie widziałem sformułowania "original contribution" (bo co to jest nieoriginal contribution), ale ja jestem z innej branży jednak xd}

\adam{Fajne też by było pierwsze zdanie wprowadzające w stylu: In this paper, we address the outlined shortcomings of SHAP (zamiast tego co jest).}

\section{VARSHAP: VARiance-based SHapley Additive exPlanations}
\label{sec:varshap}
\adam{Ewentualnie tutaj mogłoby się znaleźć jakieś zdanie wstępu, co się będzie działo w tej sekcji}
\subsection{Variance as the basis for feature attributions}
\label{subsec:variance.as.basis}
Traditional SHAP \cite{lundberg2017unified} 
%\adam{może lepiej: SHAP \cite{lundberg2017unified} calculates} 
calculates feature importance by measuring how each feature affects the expected model output. Specifically, for a feature $j$ and a subset of features $S$ not containing $j$, SHAP computes the expected model output $\mathbb{E}[\Omega(x_S, X_{-S})]$ where features in $S$ are fixed to their values from the instance being explained, and out-of-coalition features ($X_{-S}$) are treated as random variables sampled from a global background distribution. The difference in expected output when adding feature $j$ to coalition $S$ represents its marginal contribution, which is then weighted by the Shapley kernel across all possible coalitions.

The core methodological innovation of VARSHAP is to redefine local feature importance by quantifying how much the knowledge of a specific feature's value reduces the uncertainty in the model's output within the local vicinity of the instance being explained. The key insight of VARSHAP is to measure feature importance through the lens of variance reduction. Intuitively, when a feature's value is known and fixed (conceptually, when it is added to a coalition of features whose values are already known for the instance in question), the range of possible model outputs, and thus its variance, should decrease if that feature is influential for the prediction at that specific point. The more a feature contributes to reducing this local output variance, the more important it is considered for that particular prediction. Specifically, for each possible subset of features $S$ that does not contain the feature of interest, $j$, VARSHAP compares two scenarios. First, it considers the variance of the model's output when the features in subset $S$ are fixed to their values from the instance being explained, while all other features (including feature $j$ and those not in $S \cup \{j\}$) are perturbed locally around the instance. Second, it considers the variance of the model's output when the features in $S$ and feature $j$ are fixed to their values from the instance, and only the remaining features (those not in $S \cup \{j\}$) are subjected to local perturbation. The difference between these two variances, which represents the reduction in output variance attributable to fixing feature $j$ given the context of known features $S$, is then weighted according to the standard Shapley kernel. Summing these weighted variance reductions across all possible coalitions $S$ provides the VARSHAP attribution value for feature $j$. This approach anchors the explanation in the local behavior of the model, directly addressing the global dependence issue inherent in methods that rely on a global background distribution for feature perturbation or removal.

The choice of variance as the central metric for quantifying feature contributions in VARSHAP is not arbitrary; rather, it stems directly from a fundamental desideratum for feature attribution methods, which is shift invariance. A feature attribution method is considered shift-invariant if adding a constant value to all outputs of the machine learning model does not change the relative importance assigned to any of its input features. This property is crucial because the absolute scale of model outputs often carries less meaning than the relative contributions of features to those outputs. 
We would also like for our function to return zero, when the variance is zero (output does not change at all) and be symmetric, to not set preference for increasing or decreasing output value. 

\begin{proposition}
We propose that any continuous characteristic function $v(x, \Pi, S)$ used for feature attribution that aims to satisfy shift invariance must fundamentally measure the dispersion of model outputs around their local mean. Specifically, such a characteristic function must take the following general form:

$$ v(x, \Pi, S) = \mathbb{E}_{X_{-S} \sim \Pi(x_{-S}|x_S)}[d(\Omega(x_S, X_{-S}) - \mathbb{E}_{X'_{-S} \sim \Pi(x_{-S}|x_S)}[\Omega(x_S, X'_{-S})])]$$

for some attribution function $d$. We posit that the function $d$ must satisfy the axioms of zero property, sign independence, and additivity to ensure meaningful and consistent attributions. The only function $d$ that satisfies the above axioms, and a simple normalization condition $d(1) = 1$, is $d(x) = x^2$. The formal proof of this proposition is provided in the Appendix~\ref{appendix:attribution.function}.
\end{proposition}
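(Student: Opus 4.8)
The plan is to treat the claim as a functional-equation problem for the unary map $d:\mathbb{R}\to\mathbb{R}$, isolating it from the probabilistic wrapper. First I would fix some centered random variable $Z = \Omega(x_S,X_{-S}) - \mathbb{E}[\Omega(x_S,X'_{-S})]$, so that $v = \mathbb{E}[d(Z)]$ with $\mathbb{E}[Z]=0$, and then unpack what each of the three axioms forces on $d$. The \emph{zero property} should say that if $Z\equiv 0$ (the model output is locally constant) then $v=0$, i.e.\ $d(0)=0$. \emph{Sign independence} (symmetry) should say that the attribution is unchanged if we flip the sign of the deviations, i.e.\ $d(-t)=d(t)$ for all $t$, so $d$ is even. \emph{Additivity} is the crucial one: I expect it to be stated so that for independent centered contributions $Z_1,Z_2$ one has $\mathbb{E}[d(Z_1+Z_2)] = \mathbb{E}[d(Z_1)] + \mathbb{E}[d(Z_2)]$, or equivalently in a pointwise-on-deviations form $d(s+t) + d(s-t) = 2d(s) + 2d(t)$ after symmetrizing over the independent sign of one summand.

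The heart of the argument is then to show that a continuous even function $d$ with $d(0)=0$ satisfying the quadratic functional equation $d(s+t)+d(s-t) = 2d(s)+2d(t)$ (the Jordan--von Neumann parallelogram identity) must be $d(t)=ct^2$, and the normalization $d(1)=1$ pins down $c=1$. The standard route: set $s=t$ to get $d(2t)=4d(t)$, then by induction $d(nt)=n^2 d(t)$ for integers $n$, extend to rationals via $d(t) = d(q\cdot(t/q))$ giving $d(t)=q^2 d(t/q)$, hence $d(q) = q^2 d(1) = q^2$ for all rationals $q\ge 0$ (using evenness for negatives), and finally invoke continuity to conclude $d(t)=t^2$ on all of $\mathbb{R}$. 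This is the classical characterization of the quadratic functional equation; continuity is exactly what rules out the pathological non-measurable solutions built from a Hamel basis.

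I would also spend a sentence justifying the \emph{form} of $v$ itself, since the proposition asserts not only that $d=(\cdot)^2$ but that any shift-invariant continuous characteristic function must be of the displayed ``expectation of $d$ of a centered deviation'' shape. Here the argument is that shift invariance of $v(x,\Pi,S)$ under $\Omega \mapsto \Omega + c$ means $v$ can depend on the conditional law of $\Omega(x_S,X_{-S})$ only through its centered version; combined with an implicit assumption that $v$ is an expectation of a pointwise functional of the output (a ``separability'' or locality assumption on the characteristic function), one is forced to $v = \mathbb{E}[d(\Omega - \mathbb{E}\Omega)]$ for some $d$. I would flag that this step leans on a modeling assumption that should be made explicit rather than on pure mathematics.

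The main obstacle I anticipate is precisely the transition from the \emph{distributional} additivity axiom (additivity over independent random contributions to the output) to the \emph{pointwise} parallelogram identity for $d$: one must choose the two independent contributions cleverly — e.g.\ a two-point symmetric variable $\pm t$ for one summand and a constant shift $s$ for the other, then average — to extract $d(s+t)+d(s-t)=2d(s)+2d(t)$ from $\mathbb{E}[d(Z_1+Z_2)] = \mathbb{E}[d(Z_1)]+\mathbb{E}[d(Z_2)]$ without circularity. Getting the axiom statements in the appendix to line up so this reduction is clean (and checking that the chosen $Z_i$ are realizable as genuine conditional output distributions for some $\Omega,\Pi$) is where the real care is needed; the functional-equation endgame itself is routine.
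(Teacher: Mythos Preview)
Your plan is essentially the paper's proof: extract the parallelogram identity $d(s+t)+d(s-t)=2d(s)+2d(t)$ from the distributional additivity axiom via well-chosen test variables, then solve the quadratic functional equation by the standard integers $\to$ rationals $\to$ reals-by-continuity route, with evenness and $d(0)=0$ coming from the other two axioms. The only point where your sketch needs a tweak is the choice of test variables: a constant shift $s$ is not a zero-mean contribution, so the additivity axiom (which the paper applies only to centered summands) does not directly cover it; the paper instead takes \emph{both} summands to be symmetric two-point laws, $X=\pm s$ and $Y=\pm t$ each with probability $1/2$, which are centered and---after invoking evenness of $d$---yield exactly the same identity.
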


Here, $\Omega(x_S, X_{-S})$ is the model's output when features in coalition $S$ are fixed to their values $x_S$ from the instance being explained, and features $X_{-S}$ (those not in $S$) are perturbed according to a local perturbation distribution $\Pi(x_{-S}|x_S)$ centered around the instance $x$. The inner expectation $\mathbb{E}_{X'_{-S} \sim \Pi(x_{-S}|x_S)}[\Omega(x_S, X'_{-S})]$ represents the local expected model output given that features in $S$ are fixed. 

\subsection{Local perturbation framework}
\label{subsec:local.perturbation.framework}

With the justification for using variance as the core of our characteristic function established, the only remaining component to introduce before presenting the full VARSHAP feature attribution formula is the precise mechanism for local perturbation. Given a machine learning model ${\Omega: \mathcal{X} \subseteq \mathbb{R}^d \rightarrow \mathcal{Y} \subseteq \mathbb{R}}$, we define, for any specific instance $x \in \mathcal{X}$ that we wish to explain, a perturbation function $\Pi: \mathbb{R}^d \rightarrow \Delta(\mathbb{R}^d)$. Here, $\Delta(\mathbb{R}^d)$ represents the space of all probability distributions over $\mathbb{R}^d$. This perturbation function is designed such that for any given point $x$, $\Pi(x)$ yields a probability distribution with the property that if a random variable $X$ is drawn from this distribution, then its expected value is the point itself, i.e., $\mathbb{E}[X] = x$. This ensures that the feature perturbations, and consequently the analysis of the model's behavior, remain focused on the local neighborhood of the instance, thereby maintaining the locality of the explanation. This process can be interpreted as introducing controlled uncertainty specifically about the values of the out-of-coalition features, allowing us to measure the impact of knowing a specific feature's value by observing how it reduces this local uncertainty. 

For a concrete and practical implementation, we propose utilizing a multivariate Gaussian perturbation centered at the instance $x$. Specifically, $\Pi(x)$ is defined as $\mathcal{N}(\mu, \Sigma)$, where the mean $\mu$ is set to the instance itself ($\mu=x$), ensuring the perturbation is centered locally. The covariance matrix $\Sigma$ is chosen to be diagonal, meaning $\Sigma_{ij}=0$ for $i \neq j$. The diagonal elements, representing the variance for each feature's perturbation, are defined as $\Sigma_{ii} = \alpha \cdot \hat{\sigma}_i^2$, where $\hat{\sigma}_i^2$ is the estimated variance of feature $i$ observed in the training dataset, and $\alpha > 0$ is a scaling hyperparameter. This Gaussian perturbation formulation offers several advantages: it aligns well with the theoretical underpinnings (e.g., the additivity property of variance for independent perturbations), naturally accounts for the different scales and typical ranges of various features by using their empirical variances $\hat{\sigma}_i^2$, and is computationally efficient to sample from. Furthermore, the hyperparameter $\alpha$ provides a direct mechanism to control the "locality" of the explanation; smaller values of $\alpha$ lead to tighter perturbations around $x$, focusing on more immediate local behavior, while larger values explore a slightly broader neighborhood. For efficiency, VARSHAP performs local perturbations using the marginal probability distribution of features, but the framework also supports conditional probability distributions, a capacity relevant to the lively academic debate concerning the appropriate use of marginal versus conditional probabilities for simulating feature absence~\cite{janzing2020feature}.

\subsection{VARSHAP feature attribution}
\label{subsec:varshap.feature.attribution}

Having established the foundational principles of using variance reduction for local feature importance and defined the local perturbation framework, we can now fully define the VARSHAP feature attribution formula. For a given machine learning model $\Omega: \mathcal{X} \subseteq \mathbb{R}^d \rightarrow \mathcal{Y} \subseteq \mathbb{R}$, an instance $x \in \mathcal{X}$ to be explained, the set of all feature indices $\mathcal{F} = \{1, \dots, m\}$, and a local perturbation function $\Pi$, the VARSHAP attribution $\Phi_j$ for a feature $j \in \mathcal{F}$ is defined as:

\begin{equation}
\Phi_j(\Omega, \Pi, x) = \sum_{S \subseteq \mathcal{F} \setminus \{j\}} \omega(|S|) \left( \text{Var}_\Omega(S \cup \{j\}) - \text{Var}_\Omega(S) \right)
\end{equation}

where $\omega(|S|) = |S|!(k - |S| - 1)!(k!)^{-1}$ is the Shapley kernel, which assigns a weight to each coalition $S$ based on its size $|S|$, out of $k$ total features. Variance of the machine learning model $\Omega$ is formally defined as:

\begin{equation}
    \text{Var}_\Omega(S) = \mathbb{E}_{X_{-S} \sim \Pi(x_{-S}|x_S)} \left[ \left( \Omega(x_S, X_{-S}) - \mathbb{E}_{X'_{-S} \sim \Pi(x_{-S}|x_S)}[\Omega(x_S, X'_{-S})] \right)^2 \right]
\end{equation}

where $x_S$ denotes the components of the instance $x$ corresponding to features in coalition $S$, and $X_{-S}$ (and $X'_{-S}$ for the inner expectation) denotes the random components for features not in $S$, drawn from the local perturbation distribution $\Pi(x_{-S}|x_S)$ which is derived from $\Pi(x)$ by conditioning on $x_S$ (or simply by taking the marginals if perturbations are independent). The term $\Omega(x_S, X_{-S})$ is the model's output with features in $S$ fixed and others perturbed, and $\mathbb{E}_{X'_{-S} \sim \Pi(x_{-S}|x_S)}[\Omega(x_S, X'_{-S})]$ is the local expected output under these conditions. Thus, $\Phi_j$ measures the weighted average marginal contribution of feature $j$ to the reduction of local model output variance across all possible feature coalitions $S$ not containing $j$.

By defining feature attributions using the Shapley value framework applied to the characteristic function $v(S) = \text{Var}_\Omega(S)$ (the local output variance given coalition $S$), VARSHAP inherently satisfies the three fundamental Shapley value axioms, thereby providing theoretically sound and fair explanations. These axioms are:

\begin{itemize}
    \item \emph{efficiency}: The sum of all VARSHAP attributions, $\sum_{j \in \mathcal{F}} \Phi_j(\Omega, \Pi, x)$, precisely equals the total change in variance from a state where all features are unknown (and thus perturbed according to $\Pi(x)$) to a state where all features are known (fixed to their values in $x$). This total change is $\text{Var}_\Omega(\mathcal{F}) - \text{Var}_{\Omega_f}(\emptyset) = 0 - \text{Var}_{\Omega_f}(\emptyset) = -\text{Var}_{\Omega_f}(\emptyset)$, meaning the sum of attributions equals the negative of the initial total variance under full local perturbation. Thus, the attributions fully account for the overall variance reduction potential.
    \item \emph{symmetry}: If two distinct features, $j$ and $k$, have an identical impact on the model's output variance for all possible coalitions $S \subseteq \mathcal{F} \setminus \{j,k\}$ (i.e., $\text{Var}_{\Omega_f}(S \cup \{j\}) - \text{Var}_{\Omega_f}(S) = \text{Var}_{\Omega_f}(S \cup \{k\}) - \text{Var}_{\Omega_f}(S)$), then their VARSHAP attributions will be equal, i.e., ${\Phi_j = \Phi_k}$. This ensures that features contributing identically to variance changes receive equal attribution.
    \item \emph{null player (dummy)}: If a feature $j$ has no influence on the model's output variance regardless of the other features in any coalition (i.e., $\text{Var}_{\Omega_f}(S \cup \{j\}) = \text{Var}_{\Omega_f}(S)$ for all $S \subseteq \mathcal{F} \setminus \{j\}$), then its VARSHAP attribution will be zero: $\Phi_j = 0$. This guarantees that features that do not affect the local output variance receive no importance.
\end{itemize}

In addition to the additivity axiom of variance, VARSHAP also adheres to the linearity property. This property further guarantees the consistency and interpretability of VARSHAP attributions under specific model structures. We define linearity as follows:

\begin{definition}
    A feature attribution method $\Phi$ is linear if for any model $\Omega$ that is decomposable into a sum of single-feature functions, i.e., $\Omega(x) = \sum_{i \in \mathcal{F}} \Omega_i(x_i)$ (where each ${\Omega_f}_i: \mathbb{R} \rightarrow \mathcal{Y}$ operates only on feature $x_i$), and for any perturbation function $\Pi$ that generates statistically independent feature distributions for the out-of-coalition features, the attribution for any feature $i \in \mathcal{F}$ at an instance $x \in \mathcal{X}$ is given by: $\Phi(\Omega, \Pi, x)_i = \Phi(\Omega_i, \Pi, x_i)$.
\end{definition}

Here, $\Phi(\Omega_i, \Pi, x_i)$ represents the attribution calculated for the simpler model $\Omega_i$ that solely depends on feature $x_i$. Linearity guarantees that when features contribute independently to the overall model output (as in an additive model structure), VARSHAP attributions precisely isolate and quantify each feature's individual contribution to the model's output variance. In such scenarios, the importance score assigned to a feature $i$ by VARSHAP for the full model $\Omega$ will be exactly the total variance that feature $i$ would account for if it were the sole input to its respective sub-model $\Omega_i$. A detailed explanation of why the method is linear can be found in Appendix~\ref{app:linearity}.

The linearity property is important for understanding models with additive structures. For instance, consider a linear regression model $\Omega(x) = \sum_{j \in \mathcal{F}} w_j x_j$. If the features $x_j$ are perturbed independently (e.g., using the proposed Gaussian perturbation with a diagonal covariance matrix), VARSHAP's linearity ensures that the attribution for feature $i$, $\Phi_i(\Omega, \Pi, x)$, will be precisely equal to $-w_i^2 \text{Var}(X_i^{\Pi})$. Here, $w_i$ is the coefficient for feature $i$, and $\text{Var}(X_i^{\Pi})$ is the variance of the perturbed feature $X_i$ as defined by $\Pi(x_i)$. The magnitude of this attribution, $w_i^2 \text{Var}(X_i^{\Pi})$, is exactly the component of the total output variance $\text{Var}_\Omega(\emptyset)$ that is directly attributable to feature $i$'s individual variability and its impact via $w_i$. This aligns perfectly with the intuition that features with larger squared coefficients or those that exhibit greater variability under local perturbation (if influential) should be assigned greater importance in explaining the model's output variance.

%This linearity in additive models, combined with the robust variance-based formulation derived from the foundational axioms of shift-invariance and the properties of the deviation function $d$ (zero property, sign independence, and additivity), establishes VARSHAP as a theoretically sound method. Its adherence to the core Shapley axioms (efficiency, symmetry, null-player) further ensures fairness and consistency in its attributions. By centering its perturbation mechanism around the specific instance being explained, VARSHAP maintains the focus on the local behavior of the model. Collectively, these characteristics make VARSHAP an intuitively interpretable attribution method, offering meaningful insights into how features contribute to model predictions by analyzing their impact on local output variance. \molko{13.05 The last two sentences are very vague, I do not know what you mean at all. IMO You sholud make the claims/properties more concrete are remove them. BTW the texts like "focus on the local behavior of the model", "meaningful insights into how features contribute" or "impact on model prediction" appear very often and are very uninformative. I would avoid them outside of intro section.}

\section{Case Studies}
\label{sec:case.studies}

To demonstrate the practical advantages and unique insights offered by VARSHAP, this section presents a comparative analysis with two prominent and widely adopted model-agnostic feature attribution methods: SHAP~\cite{lundberg2017unified} and LIME~\cite{ribeiro2016should}. These methods were specifically selected for comparison due to their established popularity within the XAI community and their core design principle of model-agnosticism. The experimental comparison of VARSHAP with SHAP and LIME is presented in Section~\ref{sec:experimental.results}.

To evaluate VARSHAP and compare its explanatory capabilities, a controlled experiment was constructed using synthetic data designed to mimic a simplified healthcare application. This setup allows for a clear understanding of how different attribution methods perform when the underlying data generating process and model behavior can be precisely defined. The experiment considers a scenario with three distinct sub-populations of patients, labeled A, B, and C. For the training phase, these sub-populations consist of 1000, 5000, and 5000 patient samples, respectively, allowing us to investigate how methods perform with varying group sizes and potentially different underlying relationships. The model utilizes two continuous input features, $X_1$ and $X_2$, which represent hypothetical physiological measurements. These features are used to predict a single continuous output quantity, which can be interpreted as a patient's response to a treatment or a risk assessment score.

Two predictive model types were implemented to assess the attribution methods: Neural Network Models (NNM)---standard feedforward networks with an input layer, three hidden layers (50-70-50 neurons), and an output layer---and Ground-Truth Models (GTM). These are not trained models in the conventional sense but direct programmatic instantiations of the known data-generating functions. GTMs provide crucial insights by eliminating model estimation error typically present in trained models, offering a ``ground truth'' benchmark for explanation fidelity. All input features ($X_1$, $X_2$) were normalized across methods and models to ensure fair comparison.

\subsection{Case Study 1: A subset of data with different input-output characteristics}

For the first case study, the data was generated to create specific conditions for testing attribution consistency. This involved creating two distinct datasets:

\begin{itemize}
    \item Dataset 1: In this dataset, all patient groups were designed to follow the mathematical relationship $Y=X_1+0.2 \cdot X_2$. This means that the outcome variable Y for every patient, regardless of their group, is determined by the values of their features $X_1$ and $X_2$ according to this specific formula.
    \item Dataset 2: This dataset introduced a variation. While patient groups A and B still adhered to the same relationship as in Dataset 1 ($Y=X_1+0.2 \cdot X_2$), patient group C was defined by a different formula: $Y=X_1-0.05 \cdot X_1 \cdot X_2$. This change in the underlying relationship for group C creates a global distribution shift between Dataset 1 and Dataset 2.
\end{itemize}

The key experimental design element is that despite the global difference between the two datasets (due to group C's altered behavior), the local conditions for a specific point of interest were kept constant. Specifically, for a patient located at the center of cluster A, represented by the feature values [0, 0], the prediction mechanism yields the same result in both datasets. This setup allows for the investigation of whether local attribution methods provide consistent explanations for this specific point when the broader data distribution changes.

The results, illustrated in Figure~\ref{fig:attributions.comparison}, demonstrate how SHAP, VARSHAP, and LIME perform when analyzing the specific data point [0,0] in both a neural network and ground-truth models across two distinct datasets. The key observation is that SHAP's attributions for this point varied significantly between Dataset 1 and Dataset 2. In Dataset 1, SHAP assigned positive importance to both features $X_1$ and $X_2$, with $X_1$ receiving a higher attribution value. However, in Dataset 2, SHAP's explanation changed: the attribution for $X_1$ increased, while $X_2$ was assigned a negative attribution value. This occurred despite the fact that the underlying model behavior around the point [0,0] was designed to be identical in both datasets. In contrast to SHAP, both VARSHAP and LIME provided more consistent attribution values for the point [0,0] across the two datasets. This stability in explanations from VARSHAP and LIME aligns with the expectation that if the local predictive behavior of a model at a specific point remains unchanged, the explanations for that point should also remain consistent, even if the global data distribution is altered.

\begin{figure}[h]
\centering
\includegraphics[width=\textwidth]{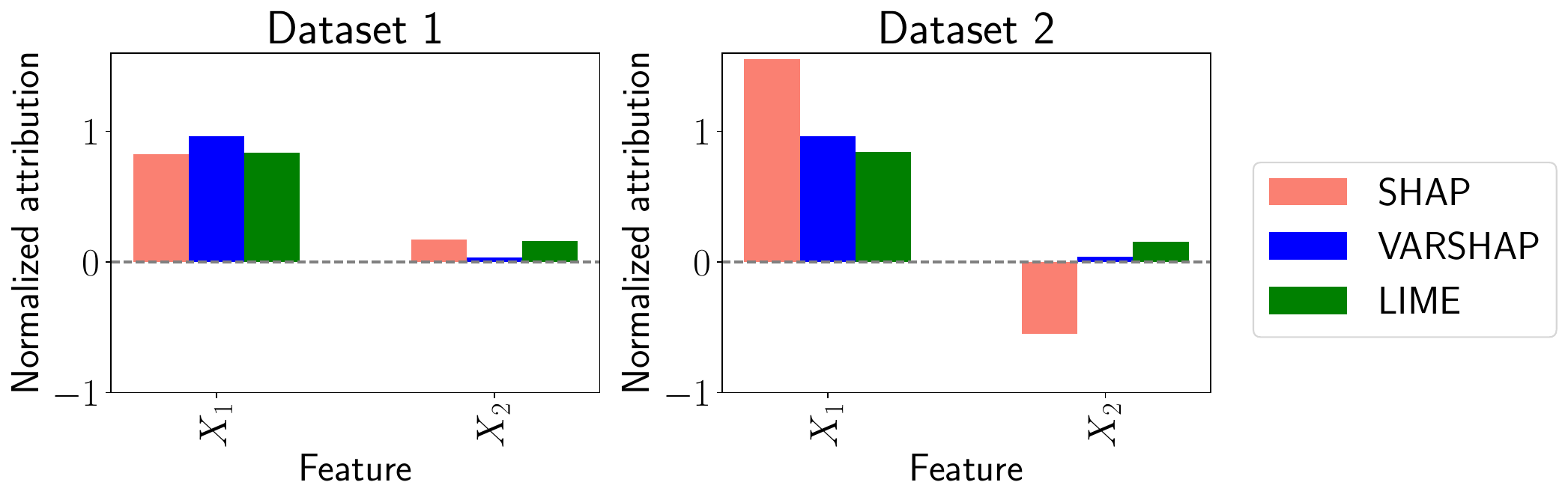}
\caption{Feature attributions for NNMs trained on Dataset 1 (left) and Dataset 2 (right)}
\label{fig:attributions.comparison}
\end{figure}

It is possible that the observed differences in attribution stem from an artifact specific to neural networks rather than being an issue inherent to SHAP. However, by examining the ground-truth models, which can be considered as having complete insight into the data's underlying distribution (akin to an ablation study), we see a strong resemblance in their results (as depicted in Figure~\ref{fig:ground.truth.attributions} ) to those obtained from the NNMs. This close alignment between the GTMs and the NNMs reinforces the argument that the attribution discrepancies are likely due to a fundamental issue within SHAP itself, rather than being a byproduct of the network architecture.

\begin{figure}[ht]
\centering
\includegraphics[width=\textwidth]{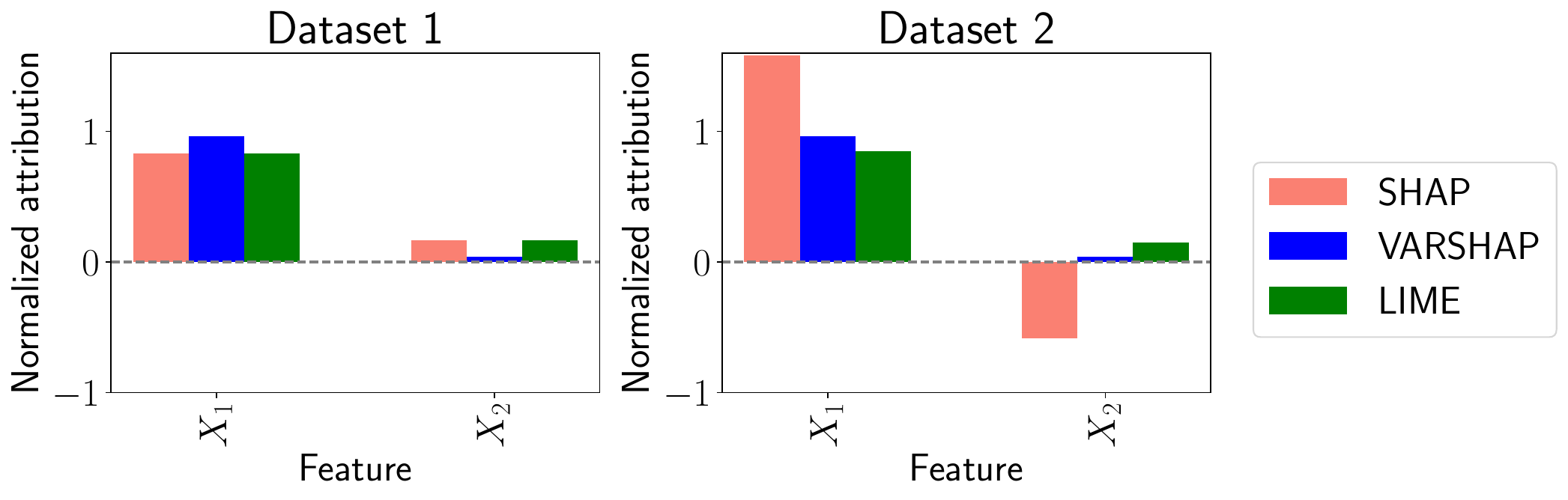}
\caption{Feature attributions for GTMs trained on Dataset 1 (left) and Dataset 2 (right)}
\label{fig:ground.truth.attributions}
\end{figure}

The inherent flaw in SHAP stems from its global nature. Because SHAP considers the overall distribution of the data, changes in one area of the feature space (like in group C) can influence the attributions for instances that are far from that change. This means that SHAP attributions might not accurately represent the specific local factors that are truly driving a particular prediction for an individual instance. In contrast, VARSHAP focuses on the variance created by perturbing features locally around the point being explained. This local sensitivity allows VARSHAP to provide attributions that more faithfully reflect how the model behaves in the immediate vicinity of that specific instance, making it more suitable when the goal is to understand the precise reasons behind individual predictions. The observation that feature attribution patterns in Figure~\ref{fig:ground.truth.attributions} for ground-truth models closely mirror those from neural network models further suggests that this sensitivity to distant changes is a characteristic of SHAP's methodology itself.

\subsection{Case Study 2: Non-linear relationship and irrelevant features}

In the second case study we specifically investigated scenarios involving non-linear relationships and the inclusion of irrelevant features. This experiment was structured to underscore the limitations of explanation methods that depend on linear approximations and to showcase VARSHAP's resilience in more challenging situations. For this purpose, a synthetic Dataset 3 was generated where the target variable is defined by the non-linear equation $Y = |X_1 + X_2|$. The third feature, $X_3$, was introduced which has no correlation with the target variable, and all features ($X_1, X_2, X_3$) were normalized. The choice of the absolute value function introduces a significant challenge for linear explanation techniques due to its distinct non-linearity at the point where $X_1 + X_2 = 0$. Furthermore, according to the null player axiom, the attribution for the irrelevant feature $X_3$ should theoretically be zero. Similar to Case Study 1, both GTMs (exactly following the relationship) and NNMs (architecture [50, 70, 50]) were implemented. This allows for an analysis of both the theoretical behavior of attribution methods and their practical performance on learned models.

When examining the feature attributions for NNMs trained on the dataset characterized by strong non-linearity and the presence of irrelevant features, significant differences emerge between explanation methods, particularly near the non-linear region of the absolute value function. As illustrated in Figure~\ref{fig:non.linear.attributions}, which displays attribution values from SHAP, VARSHAP, and LIME for a representative point in this critical area, LIME assigns a substantial portion of the attribution to the irrelevant feature $X_3$. This incorrect assignment violates the null player axiom, which dictates that features with no influence on the outcome should receive zero attribution. LIME's tendency to attribute importance to $X_3$ is also observed in the GTM where $X_3$ is explicitly constructed to have no influence, confirming that this is an intrinsic issue with LIME's methodology. In contrast, both VARSHAP and SHAP perform correctly in this regard, assigning near-zero attribution to the irrelevant feature $X_3$ and thereby upholding the null player axiom.

\begin{figure}[htbp]
    \centering

        \centering
        \includegraphics[width=\linewidth]{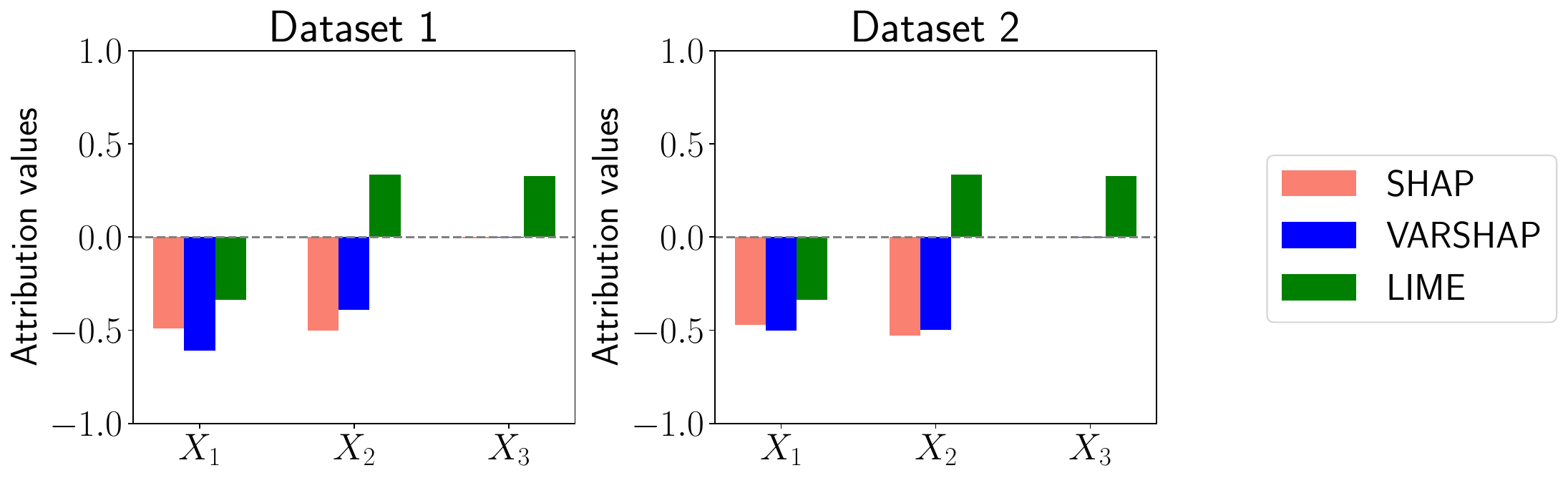}
        \caption{Neural Network Model}
        \label{fig:non.linear.attributions}
        \label{fig:non.linear.attributions.gtm}
        \label{fig:feature.attributions.non.linear} 
\end{figure}

This limitation is further confirmed by Figure~\ref{fig:non.linear.attributions.gtm}, which presents a comparison for the GTM that perfectly implements the absolute value function. Even in this ideal scenario, LIME still incorrectly attributes importance to $X_3$. The core of LIME's issue lies in its fundamental assumption of local linearity, which is ill-suited for handling the sharp non-linearity inherent in the absolute value function. VARSHAP, however, successfully navigates this challenge because its methodology centers on variance reduction. Perturbing the irrelevant feature $X_3$ leads to no change in the output variance, and consequently, VARSHAP correctly assigns it an attribution of zero. To further substantiate the lack of any relationship between $X_3$ and the target variable, partial dependency plots for both models are available in the Appendix~\ref{appendix:pdp}, reinforcing the observation that $X_3$ should indeed receive no attribution.

\section{Experimental Results}
\label{sec:experimental.results}

Evaluating feature attribution is challenging due to the absence of ground truth and the complex nature of "good" explanations. We address this using a LATEC benchmark-inspired framework~\cite{klein2024navigating}, applying diverse faithfulness, robustness, and complexity metrics across various models and datasets. Methods are ranked by their median scores per metric over all combinations, providing a robust assessment to systematically compare VARSHAP with SHAP and LIME. Faithfulness metrics aim to determine if the features highlighted as important by an attribution method genuinely contribute to the model's decision-making process. Robustness metrics, conversely, assess whether features identified as irrelevant are indeed insignificant for the model's predictions. Finally, complexity metrics evaluate the interpretability and conciseness of the explanations provided. For this evaluation, specific metrics were chosen from these categories: Faithfulness (FaithfulnessCorrelation~\cite{bhatt2020evaluating}, FaithfulnessEstimate~\cite{nguyen2020quantitative}and  MonotonicityCorrelation), Robustness (LocalLipschitzEstimate~\cite{alvarez2018towards}, MaxSensitivity~\cite{yeh2019fidelity}, and RelativeInputStability~\cite{agarwal2022rethinking}), and Complexity (Sparseness~\cite{chalasani2020concise}, Complexity, and EffectiveComplexity~\cite{nguyen2020quantitative}). These particular metrics were selected because they are not inherently image-specific and can be readily adapted to scenarios involving a smaller number of features, rendering them suitable for diverse data modalities. The parameters for each metric were determined based on recommendations from their original publications and the LATEC benchmark, and were subsequently fine-tuned to ensure an appropriate distribution of scores. Detailed hyperparameter configurations and the resulting score distributions are provided in the Appendix~\ref{appendix:model.architectures}.

VARSHAP’s performance was evaluated on the Digits dataset~\cite{pedregosa2011scikit} (1,797 8x8 handwritten digit images, 64 features), the Wine Quality dataset~\cite{cortez2009modeling} (tabular wine properties, 11-13 features) and Parkinsons Telemonitoring~\citep{parkinsons_telemonitoring_189}(with a one categorical column removed). For each, three models were trained: a simple NN, a complex NN (CNN for Digits, 5-layer NN for Wine), and a 100-tree random forest (details in Appendix~\ref{appendix:model.architectures}). VARSHAP, tested with varying scaling parameters $\sigma$ (perturbation magnitude) and employing weighted local linear regression as approximation, was compared against model-agnostic methods KernelSHAP~\cite{lundberg2017unified} (using its original sampling of data, a zero baseline, and weighted local linear regression as approximation) and LIME~\cite{ribeiro2016why} (with various sparsity coefficients).  The model-agnostic nature of these selected methods ensures a fair comparison across diverse architectures, as they operate without needing model internals or original training data.

The aggregated rankings (Table~\ref{tab:aggregated.rankings}) show VARSHAP, especially with $\sigma=0.6 $ and $\sigma = 1.0$ and KernelSHAP as the top-performing methods. These approaches, along with other VARSHAP configurations, significantly outperformed LIME variants (ranks 5.51-5.81) and KernelSHAP (0-baseline, 4.86), indicating a clear performance advantage. For detailed insights into performance on specific evaluation metrics, refer to Table~\ref{tab:metric.rankings}

\begin{table}[ht]
\centering
\caption{Aggregated ranking of feature attribution methods across all models and metrics}
\label{tab:aggregated.rankings}
\begin{tabular}{lc}
\hline
\textbf{Method} & \textbf{Average Ranking} \\
\hline
VARSHAP ($\sigma = 0.6$) & \textbf{3.39} \\
VARSHAP ($\sigma = 1.0$) & \textbf{3.39} \\
VARSHAP ($\sigma = 0.3$) & 3.64 \\
KernelShap (data sampling) & 3.74 \\
KernelShap (baseline 0) & 4.86 \\
LIME (sparsity = 1.5) & 5.51 \\
LIME (sparsity = 5.0) & 5.67 \\
LIME (sparsity = 0.5) & 5.81 \\
\hline
\end{tabular}
\end{table}

Table~\ref{tab:metric.rankings}  presents the detailed rankings of methods by individual metrics. The methods show distinct performance patterns. For faithfulness, KernelShap with data sampling often ranks top (e.g., FaithfulnessCorrelation), while VARSHAP is competitive, leading in MonotonicityCorrelation. VARSHAP variants excel in robustness metrics, which indicates VARSHAP’s variance-based approach yields explanations more stable to slight input perturbations. Furthermore, VARSHAP leads in complexity metrics, suggesting it produces more concise and interpretable explanations.
\begin{table}[ht]
\centering
\caption{Rankings by individual metrics (lower is better)}
\label{tab:metric.rankings}
\resizebox{\textwidth}{!}{%
\begin{tabular}{l|cccccc|ccc|ccc}
\hline
\multirow{2}{*}{\textbf{Method}} & \multicolumn{6}{c|}{\textbf{Faithfulness}} & \multicolumn{3}{c|}{\textbf{Robustness}} & \multicolumn{3}{c}{\textbf{Complexity}} \\
\cline{2-13}
 & FC & FC\_B & FE & FE\_B & MC & MC\_B & LLE & MS & RIS & SP & CP & ECP \\
\hline
VARSHAP ($\sigma$=0.6$\sigma$ = 0.6
$\sigma$=0.6) & 2.75 & 5.38 & 2.63 & 4.50 & 3.13 & 4.75 & \textbf{2.00} & 2.71 & 5.86 & 2.22 & \textbf{1.89} & 3.00 \\
VARSHAP ($\sigma$=0.3$\sigma$ = 0.3
$\sigma$=0.3) & 2.50 & 4.75 & 2.50 & 5.13 & 3.38 & 5.50 & 2.29 & \textbf{1.71} & 6.86 & 2.00 & 2.11 & 5.00 \\
VARSHAP ($\sigma$=1.0$\sigma$ = 1.0
$\sigma$=1.0) & 2.63 & 5.25 & 2.75 & 5.75 & 2.50 & 5.25 & 2.14 & 3.29 & 5.86 & \textbf{1.78} & 2.00 & 2.11 \\
KernelShap (data) & \textbf{2.13} & 6.63 & \textbf{2.13} & 6.63 & \textbf{2.00} & 5.38 & 4.14 & 2.29 & 5.43 & 4.11 & 4.11 & 4.44 \\
KernelShap (0) & 5.63 & 3.75 & 5.63 & 3.88 & 6.50 & 4.00 & 4.43 & 5.00 & \textbf{2.86} & 5.00 & 4.89 & 3.22 \\
LIME (1.5) & 6.38 & 3.25 & 6.88 & \textbf{3.00} & 6.50 & \textbf{3.13} & 7.43 & 7.00 & 2.71 & 7.11 & 7.22 & 4.78 \\
LIME (0.5) & 7.50 & 3.88 & 6.88 & 3.38 & 5.63 & 4.00 & 6.71 & 7.57 & 3.00 & 7.44 & 7.22 & 6.11 \\
LIME (5.0) & 6.50 & \textbf{3.13} & 6.63 & 3.75 & 6.38 & 4.00 & 6.86 & 6.43 & 3.43 & 6.33 & 6.56 & 7.33 \\
\hline
\end{tabular}}
\small{FC: FaithfulnessCorrelation, FC\_B: FaithfulnessCorrelation\_black, FE: FaithfulnessEstimate, FE\_B: FaithfulnessEstimate\_black, MC: MonotonicityCorrelation, MC\_B: MonotonicityCorrelation\_black, LLE: LocalLipschitzEstimate, MS: MaxSensitivity, RIS: RelativeInputStability, SP: Sparseness, CP: Complexity, ECP: EffectiveComplexity}
\end{table}

\paragraph{Conclusions}
In conclusion, comprehensive experiments demonstrate VARSHAP’s advantages as a feature attribution method. It performs comparably or often superior to established methods like KernelSHAP and LIME, with particular excellence in explanation robustness and complexity. VARSHAP also exhibits stability across its different $$\sigma$$ parameter values, indicating that while precise tuning can be beneficial, it is not critical for achieving strong performance. These findings validate VARSHAP’s theoretical underpinnings, showing its innovative, axiomatically-derived variance-based characteristic function translates directly into practical performance advantages. Crucially, VARSHAP retains SHAP’s desirable properties without being prone to errors from global dependency, outperforms LIME, and provides explanations grounded in Shapley’s axiomatic system.
\adam{może zrobić z tego paragraf Conclusion, żeby bardziej rzucał się w oczy?}

\section{Limitations and Future Work}
\label{sec:limitations}

The current implementation of VARSHAP uses independent Gaussian perturbations as its local sampling mechanism. While effective in our experiments, this represents a simplification that may not optimally capture complex feature interactions. The perturbation mechanism could be extended beyond independent Gaussian distributions to conditional probability distributions or other distribution families that better represent the underlying data structure.

This extension represents a promising direction for future work, particularly for improving attribution of correlated features. Additionally, more sophisticated perturbation mechanisms could help mitigate vulnerabilities to adversarial attacks on explainability, such as those demonstrated by~\citep{slack2020fooling}, which exploit the inclusion of out-of-distribution points in generating explanations. By constraining perturbations VARSHAP could potentially provide explanations less prone to such attacks.

\bibliographystyle{abbrv}
\bibliography{references}

\appendix

\section{Appendix}

\subsection{Shift invariance and centered distributions}
\label{appendix:shift.invariance}

We begin by establishing that a function of a random variable satisfying shift invariance must be expressible in terms of centered random variables.

\begin{proposition}\label{prop:shift.invariance}
Let $X$ be a random variable representing model outputs, and let $d$ be a distribution functional satisfying shift invariance, i.e., $d(X+c) = d(X)$ for all constants $c$. Then $d$ depends only on the distribution of the centered random variable $X - \mathbb{E}[X]$.
\end{proposition}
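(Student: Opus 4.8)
The plan is to reduce the claim to a single application of shift invariance together with the hypothesis that $d$ is a \emph{distribution functional}, i.e.\ that $d(X)$ depends on $X$ only through its law. Concretely, I would take two random variables $X$ and $Y$ (both assumed to have finite mean, so that centering is well defined) whose centered versions agree in distribution, $X - \mathbb{E}[X] \stackrel{d}{=} Y - \mathbb{E}[Y]$, and show that $d(X) = d(Y)$. This is exactly the statement that $d$ factors through the map $X \mapsto \mathrm{law}(X - \mathbb{E}[X])$, and it is the substantive content of the proposition; the converse implication (any functional of $X-\mathbb{E}[X]$ is automatically shift invariant) is trivial, since $(X+c) - \mathbb{E}[X+c] = X - \mathbb{E}[X]$ for every constant $c$.

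First I would set $c := \mathbb{E}[Y] - \mathbb{E}[X]$, a constant. Then $\mathbb{E}[X+c] = \mathbb{E}[Y]$, so $(X+c) - \mathbb{E}[X+c] = X - \mathbb{E}[X]$, which by hypothesis has the same law as $Y - \mathbb{E}[Y]$. A real random variable is determined by the law of its centered part together with its mean; since both the mean and the centered law of $X+c$ coincide with those of $Y$, we conclude $X + c \stackrel{d}{=} Y$. Because $d$ is a distribution functional, this gives $d(X+c) = d(Y)$. Shift invariance gives $d(X+c) = d(X)$, and chaining the two equalities yields $d(X) = d(Y)$, as desired. So the whole argument is the three-line chain $d(X) = d(X+c) = d(Y)$, the first equality being shift invariance and the second the distribution-functional property applied to $X+c \stackrel{d}{=} Y$.

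I do not expect a serious obstacle here, since the statement is essentially a bookkeeping lemma; the points I would be most careful about are regularity and wording. The argument invokes $\mathbb{E}[X]$, so it implicitly restricts attention to output distributions with finite first moment — harmless in the VARSHAP setting, where $\Omega$ is continuous and the perturbation $\Pi(x)$ has finite variance, so $\Omega(x_S, X_{-S})$ has finite mean. I would also make explicit at the outset that the phrase ``$d$ depends only on the distribution of $X - \mathbb{E}[X]$'' is to be read in the quantified form above (equal centered laws $\Rightarrow$ equal $d$-values), since this is what makes the one-line reduction legitimate and is also the form in which the proposition is used later to pin down $d(x) = x^2$.
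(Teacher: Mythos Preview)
Your proof is correct and follows essentially the same idea as the paper's: a single application of shift invariance to pass from $X$ to a centered version, combined with the fact that $d$ is a distribution functional. The paper simply takes $c=-\mathbb{E}[X]$ to obtain $d(X)=d(X-\mathbb{E}[X])$ in one line, whereas you phrase the same reduction via the more explicit ``equal centered laws $\Rightarrow$ equal $d$-values'' formulation; your version is a bit more careful about what the conclusion actually means, but the underlying argument is identical.
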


\begin{proof}
By shift invariance, for any constant $c$:
\begin{equation}
d(X + c) = d(X)
\end{equation}

In particular, for $c = -\mathbb{E}[X]$:
\begin{equation}
d(X - \mathbb{E}[X]) = d(X)
\end{equation}

This shows that $d$ depends only on the distribution of the centered random variable and that without loss of generality, we may assume $\mathbb{E}[X] = 0$ when studying properties of $d$.
\end{proof}

\subsection{Determination of the attribution function $d$}
\label{appendix:attribution.function}

In the context of the SHAP framework, we consider expectations over features outside the coalition. Our characteristic function takes the form:

\begin{equation}
v(x, \Pi, S) = \mathbb{E}[d(\Omega(x_S, X_{-S}) - \mathbb{E}[\Omega(x_S, X_{-S})])]
\end{equation}

Where $d$ is a continuous function we seek to determine. By~\Cref{prop:shift.invariance}, we can simplify by assuming $\mathbb{E}[\Omega(x_S, X_{-S})] = 0$ without loss of generality, giving us:

\begin{equation}
v(x, \Pi, S) = \mathbb{E}[d(\Omega(x_S, X_{-S}))]
\end{equation}

We propose three axioms that the attribution function $d$ should satisfy:

\begin{itemize}
    \item \emph{zero property}: $d(0) = 0$. This axiom dictates that if perturbing features locally around the instance $x$ (while features in $S$ are fixed) does not lead to any change in the model’s output whatsoever (i.e., the deviation from the local expected value is zero), then these perturbed features should collectively receive zero attribution in that specific context.
    \item \emph{sign independence}: $\forall x \in \mathbb{R} : d(-x) = d(x)$. This property ensures that positive and negative deviations from the local expected value are treated symmetrically. The magnitude of the deviation is what matters for attribution, not its direction. This means the method does not inherently prefer features that increase the model's output over those that decrease it, or vice-versa, when assessing their contribution to output variability.
    \item \emph{additivity}: For independent random variables $A$ and $B$, representing independent contributions to the overall deviation from the expected value, we require $\mathbb{E}[d(A + B)] = \mathbb{E}[d(A)] + \mathbb{E}[d(B)]$. This property is essential for ensuring that our attribution method behaves additively when combining the effects of independent sources of variation. If the total deviation is a sum of independent parts, their attributed importance should also sum up accordingly.
\end{itemize}

\begin{theorem}
The only function $d$ satisfying Axioms 1-3 and normalization $d(1) = 1$ is $d(x) = x^2$.
\end{theorem}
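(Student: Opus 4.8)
The plan is to exploit the additivity axiom by feeding it carefully chosen distributions, collapsing it into a classical functional equation for $d$. By Proposition~\ref{prop:shift.invariance} we may and do assume all random variables in play are centered, so the additivity axiom reads $\mathbb{E}[d(A+B)] = \mathbb{E}[d(A)] + \mathbb{E}[d(B)]$ for \emph{independent, mean-zero} $A,B$. (Note that if one dropped the centering, testing the axiom on two constants would force $d$ to solve Cauchy's equation and hence be linear, contradicting sign independence together with $d(1)=1$; so the mean-zero reading is the intended one, and it is exactly what Proposition~\ref{prop:shift.invariance} licenses.) The idea is that a handful of two-point distributions already pins $d$ down completely.

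First I would take $A$ to be the symmetric two-point law $\Pr[A=a]=\Pr[A=-a]=\tfrac12$ and, independently, $B$ with $\Pr[B=b]=\Pr[B=-b]=\tfrac12$; both are centered. Using sign independence, $\mathbb{E}[d(A)]=\tfrac12 d(a)+\tfrac12 d(-a)=d(a)$ and likewise $\mathbb{E}[d(B)]=d(b)$, while $A+B$ is uniform on $\{a+b,\,a-b,\,-a+b,\,-a-b\}$, so $\mathbb{E}[d(A+B)] = \tfrac14\bigl(d(a+b)+d(a-b)+d(-(a-b))+d(-(a+b))\bigr) = \tfrac12\bigl(d(a+b)+d(a-b)\bigr)$. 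Substituting into the additivity axiom yields the parallelogram (Jordan--von Neumann) identity
\begin{equation}
d(a+b)+d(a-b)=2d(a)+2d(b)\qquad\text{for all }a,b\in\mathbb{R}.
\end{equation}
This is the crux: everything now reduces to classifying the continuous solutions of this quadratic functional equation with $d(0)=0$.

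To solve it, I would first set $b=a$ to get $d(2a)=4d(a)$, and then prove $d(na)=n^2 d(a)$ for every nonnegative integer $n$ by induction, using $d(a+na)+d(a-na)=2d(a)+2d(na)$ together with evenness ($d((1-n)a)=d((n-1)a)$) to express $d((n+1)a)$ via $d(na)$ and $d((n-1)a)$; the arithmetic collapses to $(n+1)^2 d(a)$. Replacing $a$ by $a/n$ then gives $d(a/n)=d(a)/n^2$, hence $d(qa)=q^2 d(a)$ for all rationals $q$, and in particular $d(q)=q^2 d(1)=q^2$. Continuity of $d$ and density of $\mathbb{Q}$ in $\mathbb{R}$ upgrade this to $d(x)=x^2$ for all real $x$. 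Finally I would check the converse: $d(x)=x^2$ obviously satisfies the zero property and sign independence, and for independent centered $A,B$ we have $\mathbb{E}[(A+B)^2]=\mathbb{E}[A^2]+2\mathbb{E}[A]\mathbb{E}[B]+\mathbb{E}[B^2]=\mathbb{E}[A^2]+\mathbb{E}[B^2]$, so additivity holds; with $d(1)=1$ this confirms $x^2$ is the unique solution.

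The main obstacle is not the algebra but the regularity issue: the parallelogram law on its own admits pathological solutions built from biadditive forms over a Hamel basis, so the argument genuinely needs the continuity hypothesis on $d$ — it is precisely what rules those out, through the rational-homogeneity step above. A secondary point to state carefully is the passage from the informally phrased ``independent contributions to the deviation from the expected value'' to the precise mean-zero hypothesis actually used; without it the claim is false, and Proposition~\ref{prop:shift.invariance} is what makes it rigorous.
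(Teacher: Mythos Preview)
Your proof is correct and follows essentially the same route as the paper: test additivity on independent symmetric two-point laws to obtain the parallelogram identity $d(a+b)+d(a-b)=2d(a)+2d(b)$, bootstrap $d(na)=n^2 d(a)$ by induction, pass to rationals, use continuity to reach all reals, and verify the converse. Your discussion of the mean-zero reading and of why continuity is indispensable (to exclude Hamel-basis pathologies) is a welcome addition that the paper leaves implicit.
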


\begin{proof}

To simplify, we may assume $\mathbb{E}[X] = \mathbb{E}[Y] = 0$ for any random variables considered. Let's also assume bounded random variables to ensure existence of all expected values.

By Sign Independence (Axiom 2), $d$ must be an even function. Let's assume $d$ is continuous.

The Additivity axiom requires that for independent random variables $X$ and $Y$ with zero means:
\begin{equation}
\mathbb{E}[d(X+Y)] = \mathbb{E}[d(X)] + \mathbb{E}[d(Y)]
\end{equation}

By linearity of expectation, this is equivalent to:
\begin{equation}
\mathbb{E}[d(X+Y) - d(X) - d(Y)] = 0
\end{equation}

Consider independent random variables $X$ and $Y$ with symmetric two-point distributions:
\begin{equation}
P(X=s) = P(X=-s) = \frac{1}{2}
\end{equation}
\begin{equation}
P(Y=t) = P(Y=-t) = \frac{1}{2}
\end{equation}

For these distributions:
\begin{equation}
\mathbb{E}[d(X)] = \frac{d(s) + d(-s)}{2} = d(s)
\end{equation}
\begin{equation}
\mathbb{E}[d(Y)] = d(t)
\end{equation}
\begin{equation}
\mathbb{E}[d(X+Y)] = \frac{d(s+t) + d(s-t) + d(-s+t) + d(-s-t)}{4}
\end{equation}

Since $d$ is even, $d(-s+t) = d(s-t)$ and $d(-s-t) = d(s+t)$. Thus:
\begin{equation}
\mathbb{E}[d(X+Y)] = \frac{d(s+t) + d(s-t)}{2}
\end{equation}

The additivity condition becomes:
\begin{equation}
\frac{d(s+t) + d(s-t)}{2} - d(s) - d(t) = 0
\end{equation}

When $s = t = 0$, we get $d(0) = 0$, confirming Axiom 1.

Setting $s = kt$ for integer $k$, we can show by induction that:
\begin{equation}
d(kt) = k^2 d(t)
\end{equation}

\textbf{Induction Hypothesis}: Assume $f(kt) = k^2f(t)$ for some integer $k \geq 1$.

\textbf{Goal}: Prove $f((k+1)t) = (k+1)^2f(t)$

From our functional equation with $s = kt$:
$$\frac{f(kt+t) + f(kt-t)}{2} - f(kt) - f(t) = 0$$

Rearranging to isolate $f((k+1)t)$:
$$f((k+1)t) = 2f(kt) + 2f(t) - f(kt-t)$$

Applying the induction hypothesis:
\begin{align*}
& f(kt) = k^2f(t) \\
& f(kt-t) = f((k-1)t) = (k-1)^2f(t) \text{ (for $k > 1$)} \\
%& \text{If $k = 1$, then $f(kt-t) = f(0) = 0$}
\end{align*}

For $k = 1$:
\begin{align*}
f(2t) &= 2f(t) + 2f(t) - f(0) \\
&= 4f(t) \\
&= 2^2f(t)
\end{align*}

For $k > 1$:
\begin{align*}
f((k+1)t) &= 2k^2f(t) + 2f(t) - (k-1)^2f(t) \\
&= 2k^2f(t) + 2f(t) - (k^2 - 2k + 1)f(t) \\
&= 2k^2f(t) + 2f(t) - k^2f(t) + 2kf(t) - f(t) \\
&= k^2f(t) + 2kf(t) + f(t) \\
&= (k^2 + 2k + 1)f(t) \\
&= (k+1)^2f(t)
\end{align*}

This proves the induction step, confirming that $f(kt) = k^2f(t)$ for all positive integers $k$.

For rational numbers $a/b$:
\begin{equation}
d\left(\frac{a}{b}\right) = \frac{a^2}{b^2}d(1)
\end{equation}
Because:
\begin{align}
f(t) &= \frac{f(kt)}{k^2} \\
\\
\text{Setting } t &= \frac{1}{b} \text{ and } k = b: \\
\\
f\left(\frac{1}{b}\right) &= \frac{f\left(b \cdot \frac{1}{b}\right)}{b^2} \\
&= \frac{f(1)}{b^2}
\end{align}
Since the function is continuous, this extends to all real numbers:
\begin{equation}
d(x) = x^2 d(1)
\end{equation}

With the normalization condition $d(1) = 1$, we get:
\begin{equation}
d(x) = x^2
\end{equation}

This proves that the squared function is the unique solution in this case.

Now we show that for $d(x) = x^2$ these axioms hold for all random variables, concluding that variance is unique function (up to normalization constant):

\begin{enumerate}
\item \textbf{Zero Property:} $d(0) = 0^2 = 0$. This axiom is trivially satisfied.

\item \textbf{Sign Independence:} $d(-x) = (-x)^2 = x^2 = d(x)$. This confirms that $d$ treats positive and negative deviations equally.

\item \textbf{Additivity:} For independent random variables $X$ and $Y$:
\begin{align}
\text{Var}(X + Y) &= \mathbb{E}[((X + Y) - \mathbb{E}[X + Y])^2] \\
&= \mathbb{E}[((X + Y) - (\mu_X + \mu_Y))^2] \\
&= \mathbb{E}[((X - \mu_X) + (Y - \mu_Y))^2] \\
&= \mathbb{E}[(X - \mu_X)^2 + 2(X - \mu_X)(Y - \mu_Y) + (Y - \mu_Y)^2] \\
&= \mathbb{E}[(X - \mu_X)^2] + 2\mathbb{E}[(X - \mu_X)(Y - \mu_Y)] + \mathbb{E}[(Y - \mu_Y)^2] \\
&= \text{Var}(X) + 2\mathbb{E}[(X - \mu_X)(Y - \mu_Y)] + \text{Var}(Y) \\
&= \text{Var}(X) + 2\mathbb{E}[X - \mu_X]\mathbb{E}[Y - \mu_Y] + \text{Var}(Y) \quad \text{(by independence)} \\
&= \text{Var}(X) + 2 \cdot 0 \cdot 0 + \text{Var}(Y) \\
&= \text{Var}(X) + \text{Var}(Y)
\end{align}
\end{enumerate}
\end{proof}

The variance characteristic function emerges naturally from fundamental axioms about feature attribution. When we combine this result with the Shapley value framework, we obtain VARSHAP, a method that maintains the axiomatic properties of Shapley values while focusing on the local behavior of the model through variance reduction.

\subsection{VARSHAP Linearity}
\label{app:linearity}

For a model $\Omega$ that is decomposable into a sum of single-feature functions, i.e., $\Omega(x) = \sum_{i\in F} \Omega_i(x_i)$ (where each $\Omega_i: \mathbb{R} \to Y$ operates only on feature $x_i$), and for a perturbation function $\Pi$ that generates statistically independent feature distributions for the out-of-coalition features, the attribution for any feature $j \in F$ at an instance $x \in X$ is given by: 

\begin{equation}
\Phi(\Omega, \Pi, x)_j = \Phi(\Omega_j, \Pi, x_j) = \textrm{Var}_{\Pi}(\Omega_j(X_j))
\end{equation}

Where $\textrm{Var}_{\Pi}(\Omega_j(X_j))$ represents the variance of the output of function $\Omega_j$ when feature $j$ is perturbed according to the perturbation distribution $\Pi(x)$.

Recall that the VARSHAP attribution for feature $j$ is defined as:

\begin{equation}
\Phi_j(\Omega, \Pi, x) = \sum_{S\subseteq F\setminus\{j\}} \omega(|S|) (\textrm{Var}_\Omega(S) - \textrm{Var}_\Omega(S \cup \{j\}))
\end{equation}

Where $\omega(|S|) = |S|!(|F|-|S|-1)!/|F|!$ is the Shapley kernel, and $\textrm{Var}_\Omega(S)$ represents the variance of the model's output when features in $S$ are fixed to their values from instance $x$ and features outside $S$ are perturbed according to $\Pi$.

For an additive model $\Omega(x) = \sum_{i\in F} \Omega_i(x_i)$, when features are perturbed independently (as specified by our perturbation function $\Pi$ with diagonal covariance matrix), we can express the variance terms as follows:

\begin{equation}
\textrm{Var}_\Omega(S) = \mathbb{E}_{X_{-S}\sim\Pi(x_{-S}|x_S)}\left[\left(\Omega(x_S, X_{-S}) - \mathbb{E}[\Omega(x_S, X_{-S})]\right)^2\right]
\end{equation}

Due to the additive structure of $\Omega$ and the independence of features, this can be rewritten as:

\begin{equation}
\textrm{Var}_\Omega(S) = \mathbb{E}\left[\left(\sum_{i\in S} \Omega_i(x_i) + \sum_{l\in F\setminus S} \Omega_l(X_l) - \mathbb{E}\left[\sum_{i\in S} \Omega_i(x_i) + \sum_{l\in F\setminus S} \Omega_l(X_l)\right]\right)^2\right]
\end{equation}

Since the features in $S$ are fixed, their contribution to the expected value is constant. Therefore:

\begin{equation}
\textrm{Var}_\Omega(S) = \mathbb{E}\left[\left(\sum_{l\in F\setminus S} \Omega_l(X_l) - \mathbb{E}\left[\sum_{l\in F\setminus S} \Omega_l(X_l)\right]\right)^2\right]
\end{equation}

For independent random variables, the variance of a sum equals the sum of the variances:

\begin{equation}
\textrm{Var}_\Omega(S) = \sum_{l\in F\setminus S} \textrm{Var}(\Omega_l(X_l))
\end{equation}

Similarly, for $S \cup \{j\}$:

\begin{equation}
\textrm{Var}_\Omega(S \cup \{j\}) = \sum_{l\in F\setminus(S\cup\{j\})} \textrm{Var}(\Omega_l(X_l))
\end{equation}

Therefore, the difference in variance is:

\begin{equation}
\begin{aligned}
\textrm{Var}_\Omega(S) - \textrm{Var}_\Omega(S \cup \{j\}) &= \sum_{l\in F\setminus S} \textrm{Var}(\Omega_l(X_l)) - \sum_{l\in F\setminus(S\cup\{j\})} \textrm{Var}(\Omega_l(X_l)) \\
&= \textrm{Var}(\Omega_j(X_j))
\end{aligned}
\end{equation}

Substituting this into the VARSHAP attribution formula:

\begin{equation}
\Phi_j(\Omega, \Pi, x) = \sum_{S\subseteq F\setminus\{j\}} \omega(|S|) \cdot \textrm{Var}(\Omega_j(X_j))
\end{equation}

Since $\sum_{S\subseteq F\setminus\{j\}} \omega(|S|) = 1$ (a property of the Shapley kernel) and  $\textrm{Var}(\Omega_j(X_j)$ does not depend on $S$, we get:

\begin{equation}
\Phi_j(\Omega, \Pi, x) = \textrm{Var}(\Omega_j(X_j))
\end{equation}

For the specific case of linear regression models where $\Omega(x) = \sum_{j\in F} w_j x_j$, the attribution for feature $j$ becomes:

\begin{equation}
\Phi_j(\Omega, \Pi, x) = w_j^2 \cdot \textrm{Var}(X_j)
\end{equation}

Where $\textrm{Var}(X_j)$ is the variance of feature $j$ under the perturbation distribution $\Pi$. This confirms that when features contribute independently to the model output, VARSHAP attributions precisely isolate and quantify each feature's individual contribution to the model's output variance.

\subsection{Partial dependency plots}
\label{appendix:pdp}
\paragraph{Neural Network Model}

\begin{figure}[htbp]
    \centering 

    \begin{subfigure}[b]{0.48\textwidth} 
    \centering 
        \includegraphics[width=\linewidth]{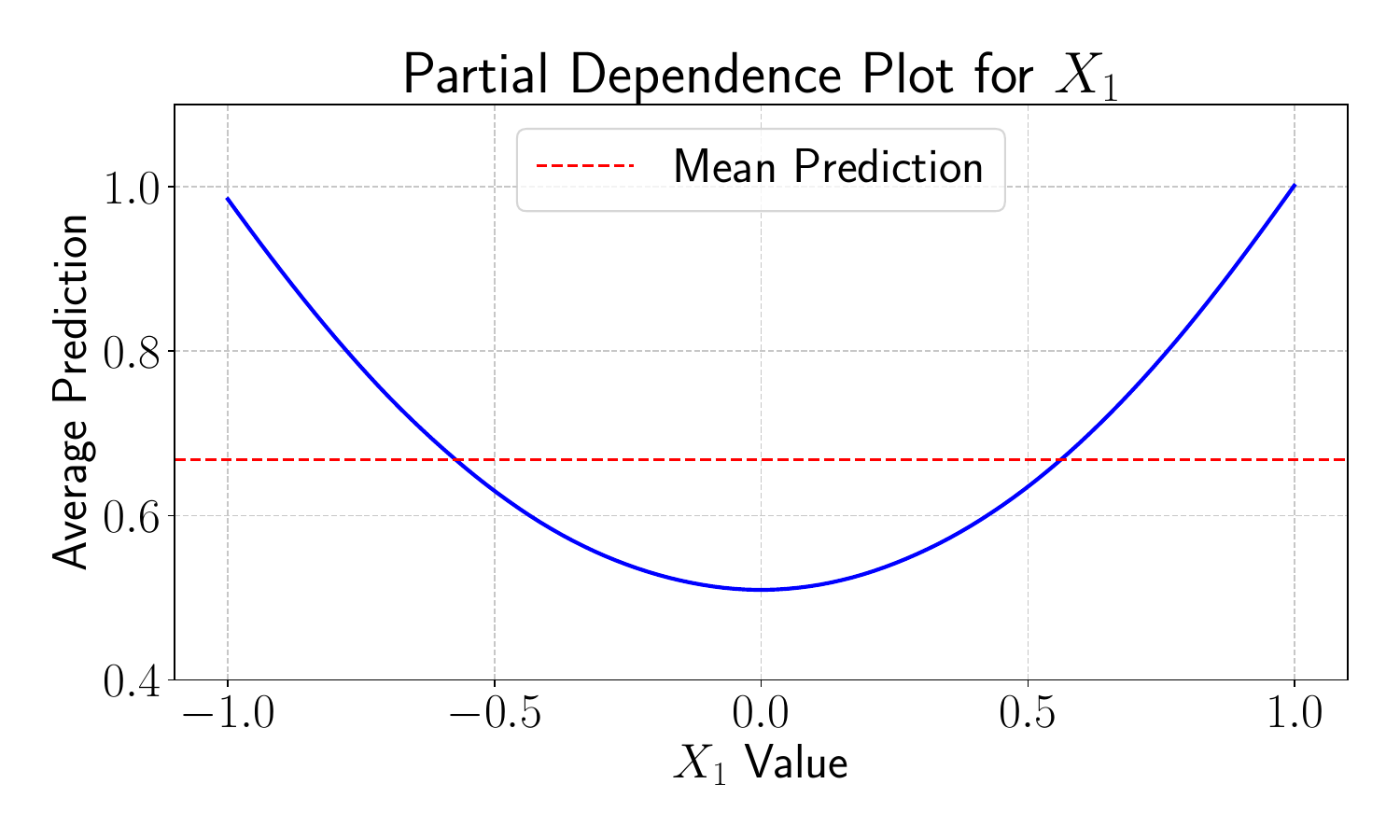}
    \caption{Partial dependency plot for feature $X_1$ for Neural Network Model.} 
    %\label{fig:faithfulnesscorrelation} 
                                                        
    \end{subfigure}
    \hfill 
    \begin{subfigure}[b]{0.48\textwidth} 
\centering 
        \includegraphics[width=\linewidth]{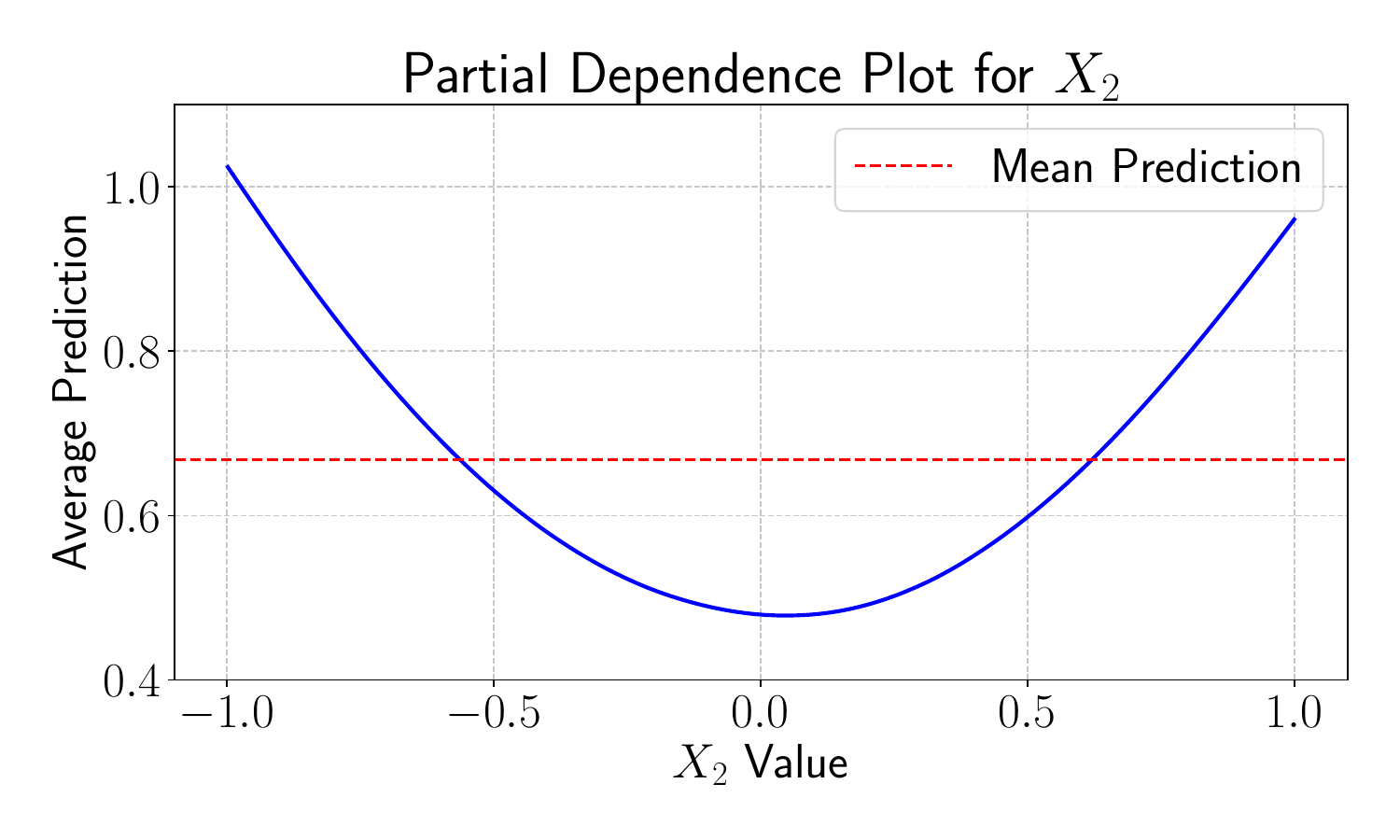}
    \caption{Partial dependency plot for feature $X_2$ for Neural Network Model.} 
                                                                  
    \end{subfigure}

    \caption{Feature attributions trained on Dataset 3 \molko{13.05 the text in those figures is too small}} 
\end{figure}
\begin{figure}[htbp]
    \centering 

    \begin{subfigure}[b]{0.48\textwidth} 
    \centering 
        \includegraphics[width=\linewidth]{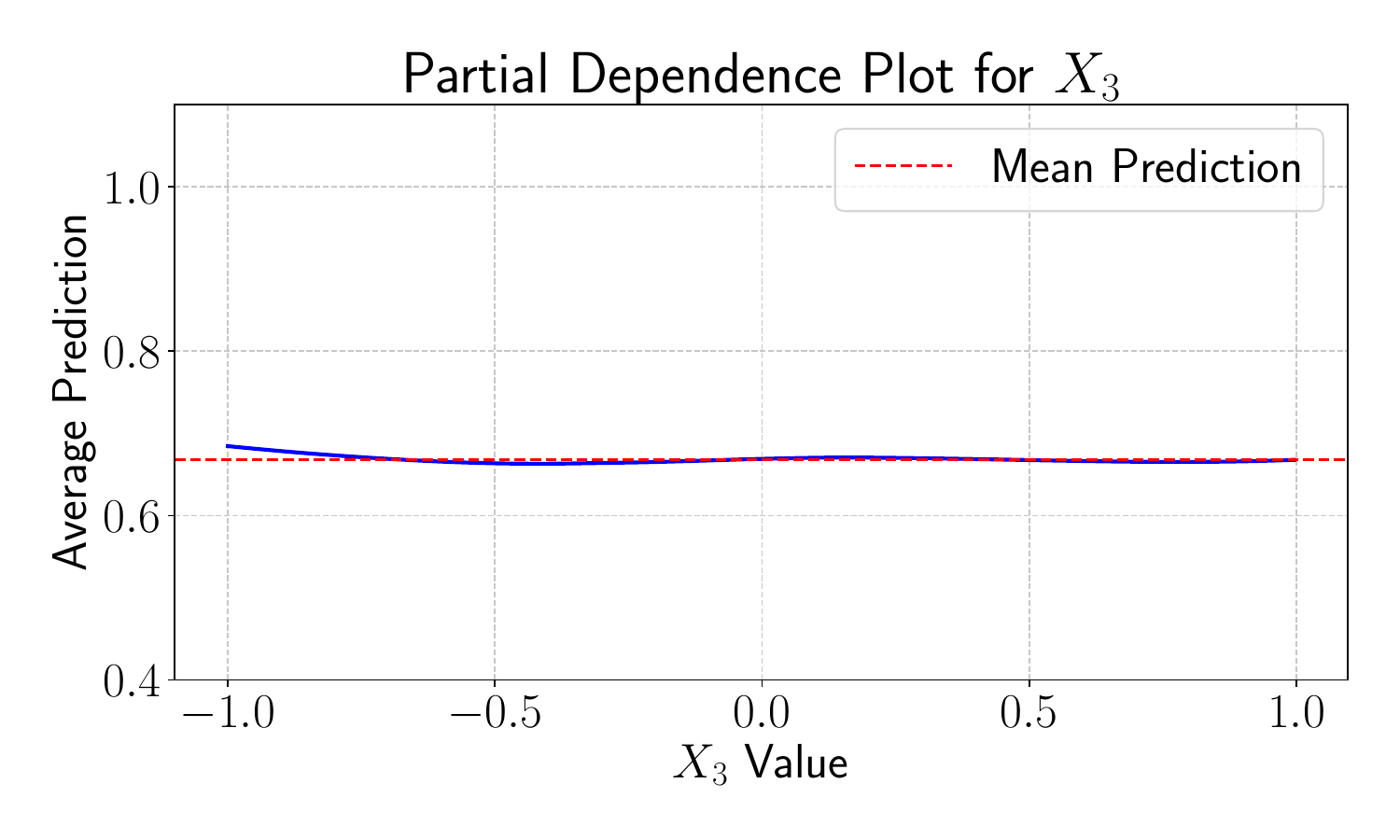}
    \caption{Partial dependency plot for feature $X_3$ for Neural Network Model.} 
    %\label{fig:faithfulnesscorrelation} 
                                                        
    \end{subfigure}
    \hfill 
    \begin{subfigure}[b]{0.48\textwidth} 
\centering                                                     
    \end{subfigure}

\end{figure}

\paragraph{Ground-truth Model}
\begin{figure}[htbp]
    \centering 

    \begin{subfigure}[b]{0.48\textwidth} 
    \centering 
        \includegraphics[width=\linewidth]{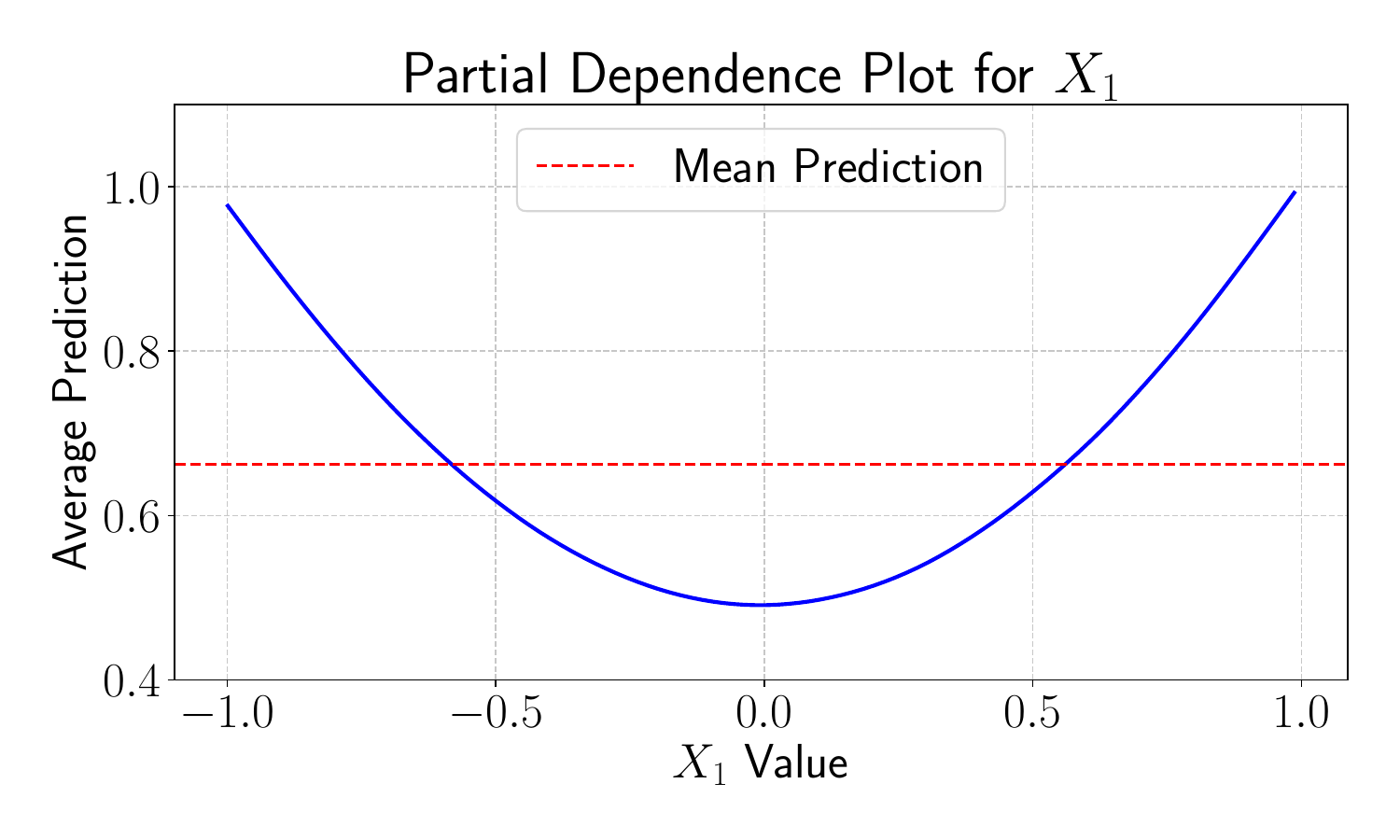}
    \caption{Partial dependency plot for feature $X_1$ for Ground-truth Model.} 
    %\label{fig:faithfulnesscorrelation} 
                                                        
    \end{subfigure}
    \hfill 
    \begin{subfigure}[b]{0.48\textwidth} 
\centering 
        \includegraphics[width=\linewidth]{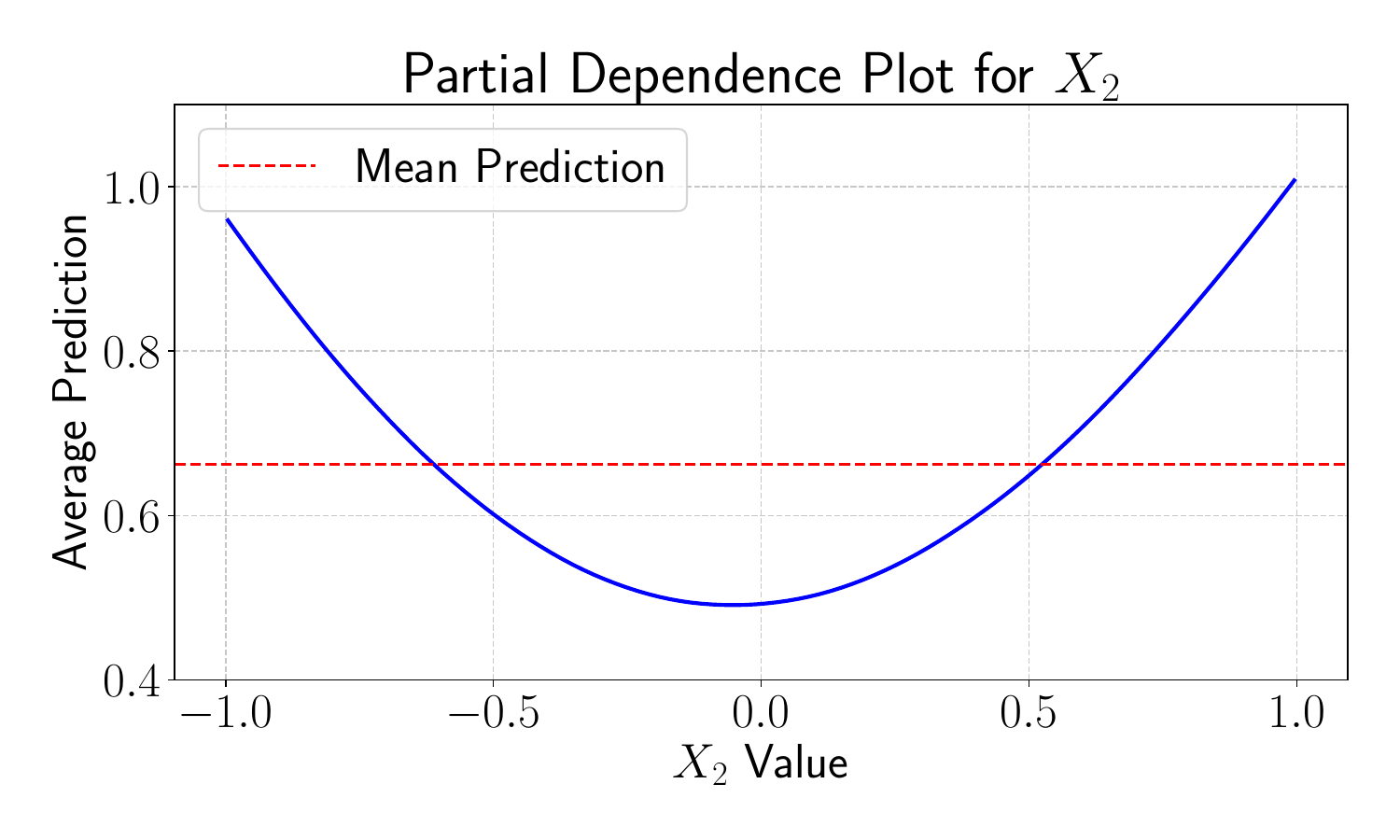}
    \caption{Partial dependency plot for feature $X_2$ for Ground-truth Model.} 
                                                                  
    \end{subfigure}

\end{figure}
\begin{figure}[htbp]
    \centering 

    \begin{subfigure}[b]{0.48\textwidth} 
    \centering 
        \includegraphics[width=\linewidth]{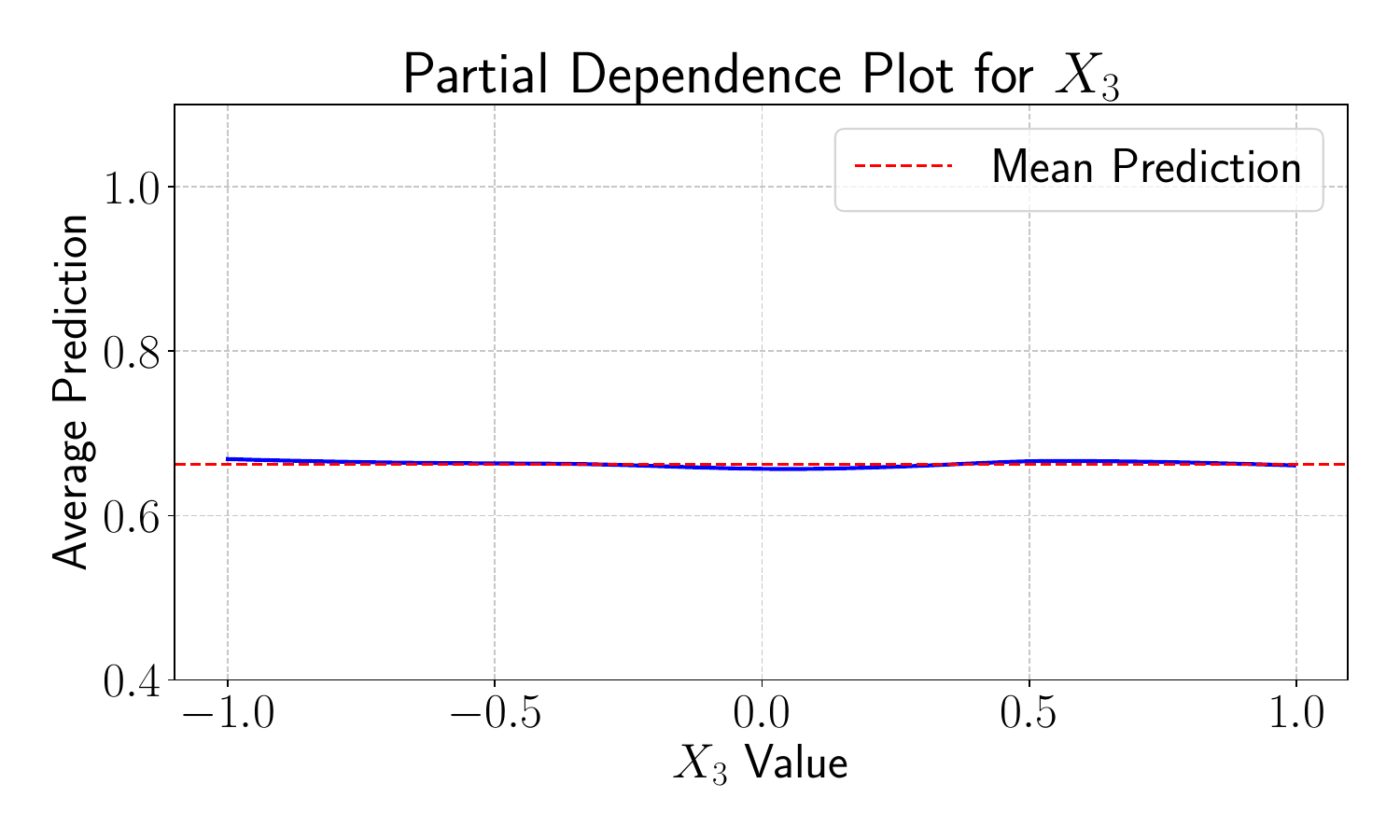}
    \caption{Partial dependency plot for feature $X_3$ for Ground-truth Model.} 
    %\label{fig:faithfulnesscorrelation} 
                                                        
    \end{subfigure}
    \hfill 
    \begin{subfigure}[b]{0.48\textwidth} 
\centering                                                     
    \end{subfigure}

\end{figure}

\subsection{Model architectures and hyperparameters}
\label{appendix:model.architectures}
In all cases $80\%$ of dataset was used as a training dataset and $20\%$ as a test dataset. 
For neural network optimization, we performed grid search hyperparameter tuning over batch sizes (32, 64, 128) and learning rates (0.1, 0.01, 0.001) to identify the configuration yielding optimal model performance. All neural network models used Adam~\citep{kingma2014adam} as an optimizer.

\paragraph{Digits Dataset Models:}
\begin{itemize}
    \item \textbf{DigitsNNModel:} A simple multi-layer perceptron (MLP) with two hidden layers, each with 128 neurons. The architecture consists of an input layer accepting 64-dimensional feature vectors, followed by two ReLU-activated hidden layers with dropout regularization (rate=0.2), and a final output layer with 10 classes corresponding to digits 0-9. Chosen hyperparameters: \textbf{lr=0.01, batch size = 256}
    
    \item \textbf{DigitsConvNNModel:} A convolutional neural network designed for the 8×8 digits images. The architecture includes a single 2D convolutional layer with 32 filters and 3×3 kernel size, followed by a flattening operation and two fully-connected layers. The hidden layer has 64 neurons with ReLU activation and dropout (rate=0.2), while the output layer has 10 neurons corresponding to the digit classes. Chosen hyperparameters: \textbf{lr=0.01, batch size = 256}
    
    \item \textbf{DigitsTreeModel:} RandomForestClassifier with n\_estimators=20, max\_depth=4, min\_samples\_split=2.
\end{itemize}
\paragraph{Parkinson Dataset Models:}
\begin{itemize}
    \item \textbf{ParkinsonNNModel:} A standard MLP with two hidden layers, each containing 128 neurons. The architecture begins with an input layer accepting 20-dimensional feature vectors, followed by two ReLU-activated hidden layers with dropout regularization (rate=0.2), and a final single-neuron output layer for regression tasks. Chosen hyperparameters: \textbf{lr=0.001, batch size = 64}
    
    \item \textbf{ParkinsonDeeperNNModel:} A deeper neural network consisting of five layers designed for regression tasks. The architecture begins with an input layer accepting 20-dimensional feature vectors, followed by three hidden layers of 96 neurons each and a fourth hidden layer with 48 neurons, all using ReLU activation and dropout regularization (rate=0.1). The model concludes with a single-neuron output layer for predicting the continuous target variable. Chosen hyperparameters: \textbf{lr=0.001, batch size = 32}
    
    \item \textbf{ParkinsonTreeModel:} RandomForestRegressor with n\_estimators=20, max\_depth=4, min\_samples\_split=2.
\end{itemize}

\paragraph{Wine Dataset Models:}
\begin{itemize}
    \item \textbf{WineNNModel:} A standard MLP with two hidden layers, each containing 128 neurons. The architecture begins with an input layer accepting 11-dimensional feature vectors (representing wine characteristics), followed by two ReLU-activated hidden layers with dropout regularization (rate=0.2), and a final output layer with 10 neurons. Chosen hyperparameters: \textbf{lr=0.01, batch size = 32}
    
    \item \textbf{WineDeeperNNModel:} A deeper network with five layers designed for the wine quality prediction task. The architecture consists of an input layer accepting 11-dimensional feature vectors, followed by three hidden layers with 96 neurons each and a fourth hidden layer with 48 neurons. All hidden layers use ReLU activation and dropout regularization (rate=0.1). The output layer contains 10 neurons corresponding to wine quality scores. Chosen hyperparameters: \textbf{lr=0.001, batch size = 256}
    
    \item \textbf{WineTreeModel:} RandomForestClassifier with n\_estimators=100, max\_depth=None, min\_samples\_split=2.
\end{itemize}

\subsection{Metrics hyperparameters and histograms}
\label{appendix:metrics_params}
For our evaluation, we configured metrics with the following hyperparameters:

\textbf{Faithfulness Metrics}: FaithfulnessCorrelation was configured with 100 runs and subset size 6, using either black'' or uniform'' perturbation baselines. FaithfulnessEstimate used single-feature steps with both perturbation baseline types. MonotonicityCorrelation used single-feature steps with 10 samples.

\textbf{Robustness Metrics}: LocalLipschitzEstimate was implemented with 10 samples, perturbation standard deviation of 0.2, and zero mean. MaxSensitivity used 10 samples with a lower bound of 0.02. RelativeInputStability was configured with 10 samples.

\textbf{Complexity Metrics}: Sparseness and Complexity were used with default parameters, while EffectiveComplexity employed an epsilon value of 0.05.

All metrics were applied consistently across models and datasets to ensure fair comparison between attribution methods. Below (Figure~\ref{fig:faithfulnesscorrelation}) we present a histogram for the FaithfulnessCorrelation metric on neural network models for the Parkinson dataset. Additional histograms for other metrics, methods, models and datasets can be found in the supplementary materials.
\label{appendix:metric_hist}
\begin{figure}[htbp]
    \centering 

        \includegraphics[width=\linewidth]{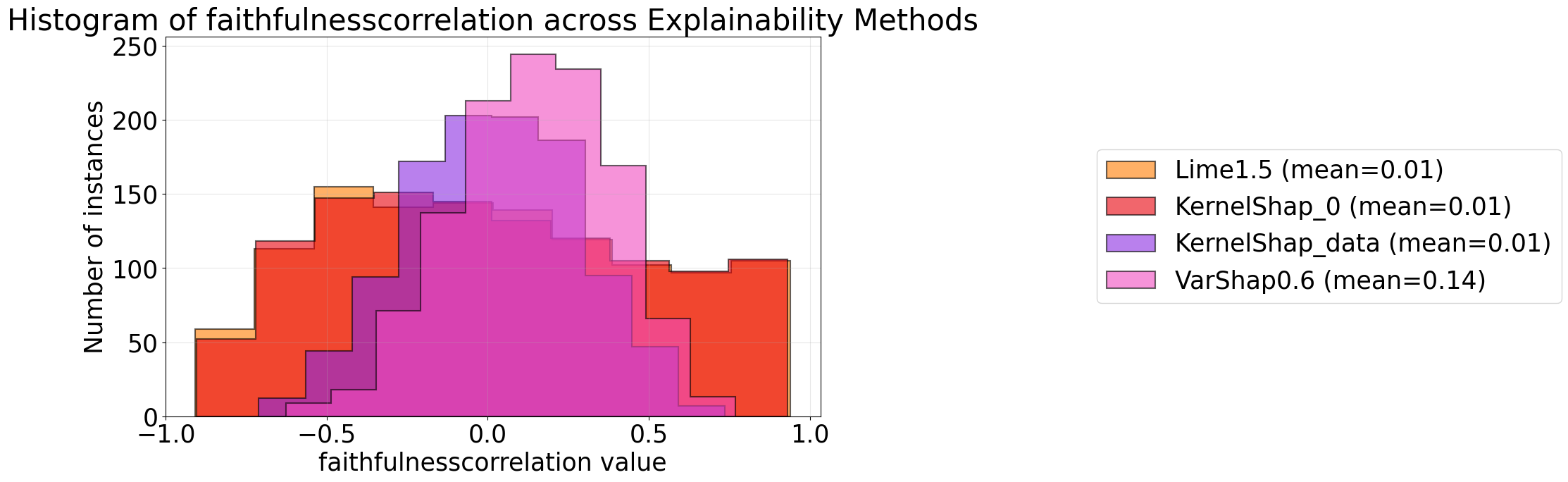}

    \caption{Histogram for metric faithfulnesscorrelation for NN moel for Parkinson dataset} 
    \label{fig:faithfulnesscorrelation} 
\end{figure}

\subsection{Decision surfaces}
\label{appendix:decision.surfaces}

Figure~\ref{fig:decision.surfaces} provides a visual representation of the model's decision-making process for both Dataset 1 and Dataset 2, while Figure~\ref{fig:decision.surfaces.ground.truth} depicts decision surface for Dataset 3. Imagine these "decision surfaces" as topographical maps where the contours indicate how the model arrives at a particular prediction. The specific point under examination, [0,0], is marked in red on these maps. What becomes evident from these visualizations is that even though the overall "landscape" or global behavior of the model changes quite dramatically in Dataset 2 (especially in the area corresponding to group C, which has a different underlying rule), the "local terrain" or gradient immediately surrounding the point [0,0] remains largely the same in both datasets. This means that for data points very close to [0,0], the model makes decisions in a similar fashion in both scenarios. It's worth noting a subtle difference: for the neural network models, the learning process itself introduces very slight alterations to the decision surface even in this local region. In contrast, for the ground-truth models, which are perfectly defined, the decision surfaces in the immediate vicinity of [0,0] are exactly identical. The fact that similar patterns in how importance is attributed to features were seen in both the more complex neural network models and the simpler, perfectly known ground-truth models is significant. It confirms that the way these attribution methods behave (e.g., SHAP's changing attributions versus VARSHAP and LIME's consistency) is a fundamental characteristic of the methods themselves, rather than some quirk or artifact introduced by the model training process.

\begin{figure}[ht]
\centering
\includegraphics[width=\textwidth]{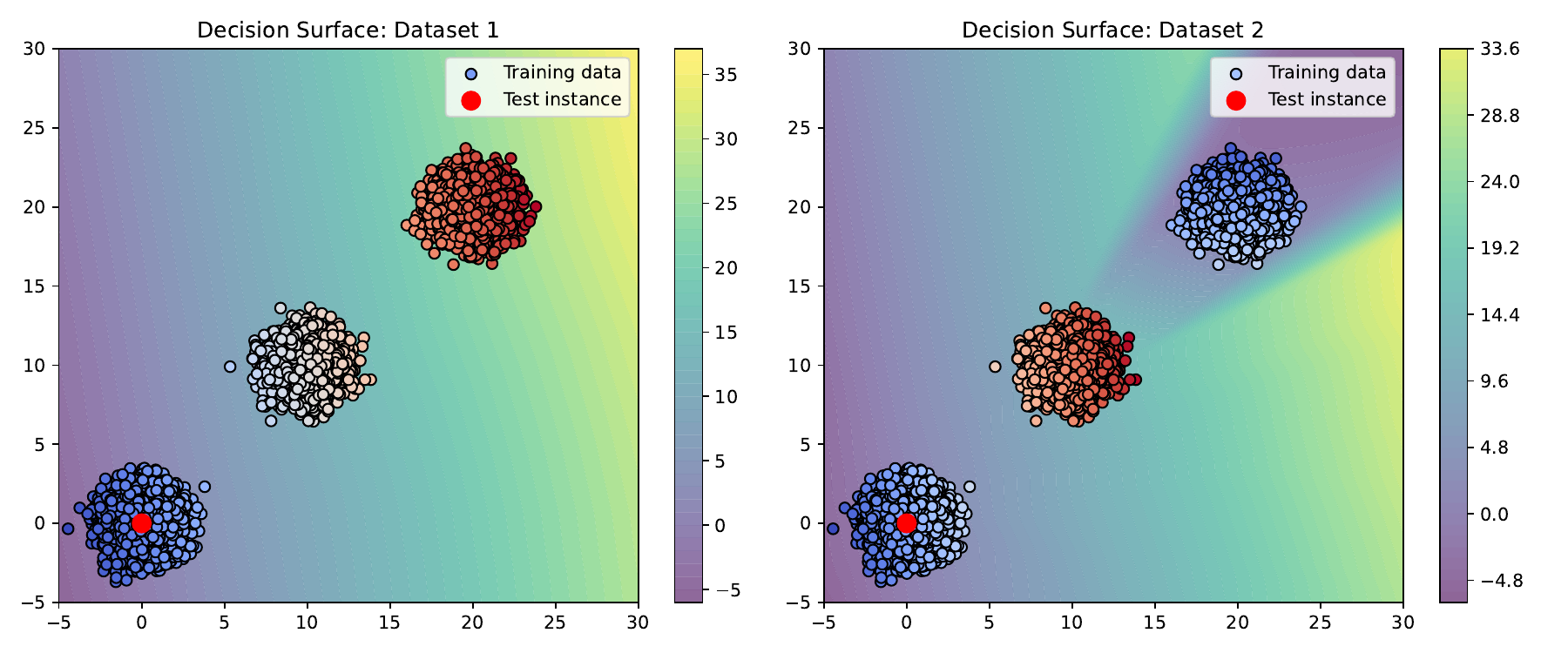}
\caption{Decision surface visualization of NNMs trained on Dataset 1 (left) and Dataset 2 (right). The test instance at $[0,0]$ is marked in red.}
\label{fig:decision.surfaces}
\end{figure}

\begin{figure}[ht]
\centering
\includegraphics[width=\textwidth]{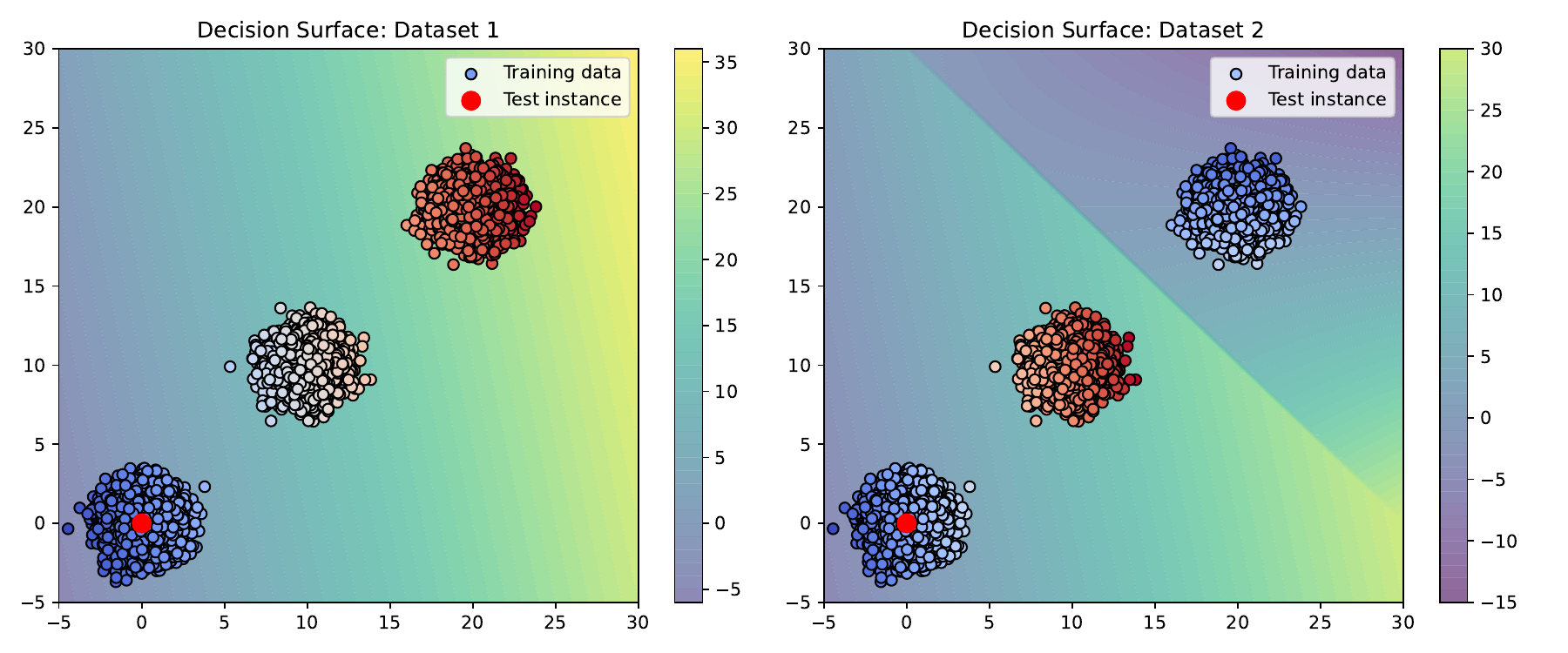}
\caption{Decision surface visualization of GTMs trained on Dataset 1 (left) and Dataset 2 (right). The test instance at $[0,0]$ is marked in red.}
\label{fig:decision.surfaces.ground.truth}
\end{figure}

\subsection{Dataset licences}
\label{appendix:licences}
All datasets used in this study (Wine Quality~\citep{cortez2009modeling}, Digits~\citep{pedregosa2011scikit}, and the Parkinson's Telemonitoring~\citealp{parkinsons_telemonitoring_189} dataset from UCI Machine Learning Repository) are licensed under a Creative Commons Attribution 4.0 International (CC BY 4.0) license.

\subsection{Compute resources}
\label{appendix:resources}
The computations were carried out on a FormatServer THOR E221 (Supermicro) server equipped with two AMD EPYC 7702 64-Core processors and 512 GB of RAM with operating system Ubuntu 22.04.1 LTS. All experiments were run using only 4 cores and 16GB of RAM. The experiments from case study section took less than 1 CPU hour Model training was computationally efficient, requiring less than 4 CPU hours in total. However, the comprehensive ranking calculation across all metrics, models, and datasets was more resource-intensive, consuming approximately 1000 CPU hours. The finetuning of metrics hyperparameters took around 200 CPU hours.

\end{document}